\theoremstyle{definition}
\newcommand{\true}{\top}
\newcommand{\false}{\bot}
\newcommand{\dia}[1]{\ensuremath{\langle #1 \rangle}}
\newcommand{\abs}[1]{\ensuremath{\pipe #1 \pipe}}
\newcommand{\upd}[1]{\ensuremath{\dia{U,#1}}}
\newcommand{\updk}{\ensuremath{\dia{(U_1,u_1) \cup \dots \cup (U_k,u_k)}}}
\newcommand{\delprod}{\ensuremath{DEL_{\otimes}}}
\newcommand{\delstar}{\ensuremath{DEL_{\ast}}}
\newcommand{\delminusprod}{\ensuremath{DEL^{-}_{\otimes}}}
\newcommand{\delminustar}{\ensuremath{DEL^{-}_{\ast}}}
\newcommand{\obs}[1]{\ensuremath{O_{#1}}}
\newcommand{\new}[1]{\ensuremath{O^{+}_{#1}}}
\newcommand{\del}[1]{\ensuremath{O^{-}_{#1}}}
\newcommand{\pipe}{\text{\textbar}}
\newtheorem{definition}{Definition}
\newtheorem{theorem}{Theorem}
\newtheorem{example}{Example}%
\newtheorem{remark}{Remark}%
\title{Changing agents and ascribing beliefs in dynamic epistemic logic}
\author{
  Shikha Singh\\
  Computer Science and Engineering\\
  IIT Madras,
  Chennai 600036\\
  \texttt{cs16d008@smail.iitm.ac.in} \\
   \And
  Kamal Lodaya \\
  Bengaluru 560064\\
   \AND
   Deepak Khemani \\
   Computer Science and Engineering\\
  IIT Madras, Chennai 60036\\
  \texttt{khemani@iitm.ac.in} \\
}
\begin{document}
\maketitle

\begin{abstract}
In dynamic epistemic logic \citep{van2007dynamic} it is customary to use
an \emph{action frame} \citep{baltag1998logic, baltag2004logics} to describe
different views of a single action.
In this article, action frames are extended to add or remove agents, we call 
these \emph{agent-update frames}.
This can be done selectively so that only some specified agents get 
information of the update,
which can be used to model several interesting examples
such as private update and deception, studied earlier by \citet{sakama, 
baltag2004logics, Van2012lying}.
The \emph{product update} of a Kripke model by an action frame
is an abbreviated way of describing the transformed Kripke model which is the
result of performing the action.
This is substantially extended to a \emph{sum-product update} of a Kripke model
by an agent-update frame in the new setting.
These ideas are applied to an AI problem of modelling a story.
We show that dynamic epistemic logics, 
with update modalities now based on agent-update frames, 
continue to have sound and complete proof systems.
Decision procedures for model checking and satisfiability have expected complexity.
For a sublanguage, there are polynomial space algorithms.
\end{abstract}

\begin{quote}
\textit{O, what a tangled web we weave when first we practise to deceive!} \cite{walter}
\end{quote}


\section{Introduction}\label{sec:introduction}

Epistemic and doxastic logics have been used to model many information and communication situations in multi-agent domains \citep{fagin2004reasoning, meyer2004epistemic}. The language is defined on a fixed set of propositional and agent variables and its semantics is given in terms of Kripke models, also known as possible worlds semantics \citep{kripkemodal,hintikka1962knowledge}. 
Dynamic epistemic logic is used to model knowledge acquisition and belief revision of agents \citep{van2007dynamic,baltag1998logic, baltag2004logics}. 
Updates in an action framework can represent private as well as public announcements \citep{plazapublic}.
In artificial intelligence, $DEL$ has been used to model forward progression in epistemic planning \citep{bolander2017gentle, baral2022action}.
Extensions of $DEL$ were proposed to formalise false-belief tasks \citep{bolander2018seeing}.

In this article we consider an AI application \citep{singh2019planning} of modelling a story in which agents can be added and deleted.
The agent updates can be done selectively, 
leading to an informational update for only a subset of agents. 
The set of propositions remains fixed.

With nested modalities it is natural for an agent to have beliefs about other agents, including their existence.
A key question is where do the beliefs of a new agent come from.
Things get really exciting when fictional agents come into the picture,
where their creators themselves do not believe in their agency, but other agents swallow the fiction,
like Captain William Martin in Operation Mincemeat conjured up by the Allies in World War II, as in the film of \cite{mincemeat}.
In this military situation, the beliefs were planted by the Allies.
But in a more routine situation,
such as when new members join a group on social media or follow a person's page,
what beliefs can be ascribed to them?


\cite{baltag1998logic} introduced a product update which takes a model before the update, executes an action, and gives a new model after the update.
They gave several examples of its use and an axiomatization reducing $DEL$ to epistemic logic EL (without updates).
Satisfiability was also reduced to the simpler logic.
The complexity of satisfiability was determined by \cite{aucher2013complexity}.

In this article, the product update operation undergoes a substantial enhancement.
It incorporates a sum to accommodate beliefs of new agents
and a difference to remove beliefs of deleted agents.
Axiomatization and completeness follow $DEL$ techniques.
Complexity of satisfiability is preserved.
A sublanguage with better complexity is identified.

We show the application of the new framework to modelling a children's story, \emph{The Gruffalo} by  \cite{donaldson1999gruffalo}.

Here is an outline of the article.

Section~\ref{sec:background} gives some basic definitions of dynamic epistemic logics, 
as well as the action frames which our work extends.
In Section~\ref{sec:update-models}, we describe our new agent-update frames and show agent-updates with several examples.
These show the expressiveness available with our general mechanism of sum-product update.
Section~\ref{sec:detailed-gruffalo} applies our definitions to follow 
the children's story,
which we found explores complex ideas that required ingenuity to deal with technically.
We view the ability to model this story as a contribution of the formalism.
Section~\ref{sec:aul} generalises the logics of Section~\ref{sec:background} to our new updates, 
and has more traditional theoretical results.
We discuss a proof system to derive valid formulas. 
We prove its soundness and completeness, and argue that the complexity of model checking the logic as well as deciding validity of formulas does not 
exceed that of dynamic epistemic logic.
Finally, Section~\ref{sec:lit} discusses related work.


\section{Background}\label{sec:background}
Let $A$ be a nonempty set of agents.
A language $Prop(A)$ of fluents over $A$ (mostly we write it as $Prop$) is formed from
a signature of predicate symbols $p$, each with a fixed nonnegative arity. 
Thus $p(i,j)$ is a fluent for a binary predicate symbol, instantiated to agents $i$ and $j$.  

A constant domain Kripke model \citep{kripkemodal} over agents $A$ and fluents $Prop$ can be defined as follows.
\begin{definition}[Kripke model]\label{def:kripke-model}
$M=(S,\{R_i \mid i \in A\},I)$, where 
$S$ is a set of possible worlds, 
$R_i$ $\subseteq S \times S$ are accessibility relations for every agent $i \in A$, and 
$I: Prop \to 2^S$ is an interpretation function assigning states where the fluent is $true$. 
$sR_it$ abbreviates $\langle s,t \rangle\in R_i$, it means that at a world $s$, agent $i$ believes possible that the world may be $t$. 
We use pointed Kripke models, written as $(M,s)$ where $s \in S$ is a \emph{designated} state. 
\end{definition}

In the figures, a directed arrow labelled with $i$ from world $s$ to world $t$ depicts $sR_it$ 
and a bidirectional arrow between two worlds, say $s$ and $t$, labelled with $i$, represents arrows for $sR_it$ and $tR_is$. 

When used as an input to an algorithm, the size of a Kripke model 
is the sum of the number of states $\abs{S}$, the number of agents $\abs{A}$,  
the sizes of the accessibility relation $\abs{R_i}$ of every agent $i$ and 
the size of the interpretation, presented in some convenient manner for every fluent.
The asymptotically important component will be the sizes of the accessibility relations, which can be quadratic in the number of states.
The size of the interpretation can be large,
but since the fluents are fixed, we ignore it.
Thus the input is of size $O(\abs{A} \abs{S}^2)$.

The multi-agent scenarios that motivate us will not have all the agents believe in the agency of all the agents at all possible worlds. We consider private and deceptive updates which bring about an asymmetric information change (Section~\ref{sec:update-models}).
We build our agent-updates into a more general framework of action frames as defined by \cite{baltag1998logic} and developed by \cite{bek2006change}.\footnote{Action frames are called ``action models'' in the literature. We prefer frames to distinguish them from ``Kripke models''.}

\begin{definition}[Action frame]\label{def:actionmodel}
Let EL be a logical language defined on 
a finite set of agents $A$ and a finite set of fluents $Prop$. 
We will define this language below in Definition~\ref{def:logic}.

An \emph{action frame}
$U = (E, \{\obs{i} \mid i \in A\}, pre, post)$ 
consists of a finite set of \emph{events} $E$ 
and \emph{observability} relations for each agent: $\obs{i}$ $\subseteq E \times E$, 
together with functions $pre: E \to$ EL 
and $post: E \to (Prop \to \{\true,\false,no\})$,
which assign a \emph{precondition} and a \emph{postcondition} for each event.
\end{definition}

An action frame is the representation of an action as seen by different agents. 
A pointed action frame $(U,u)$ with $u \in E$ specifies the semantics of an action which updates a Kripke model,
applied at event $u$ where precondition $pre(u)$ holds.


The size of an action frame $U$ adds the sizes of the sets $A$ and $E$, the observability relations, 
and the preconditions and postconditions at every event. 
It is asymptotically dominated by the sizes of its observability relations, $O(|A||E|^2)$.

In figures of action frames, a directed arrow labelled with $i$ from an event $u$ to event $v$ depicts $u\obs{i}v$, it means that when event $u$ occurs agent $i$ considers event $v$ occurring. 
Where required, the precondition and postcondition of an event are shown alongside. 
Otherwise the event $u$ is a \emph{skip}, its precondition can be taken as $pre(u)=\top$ 
and the postcondition can be taken as no change, $post(u)(p)=no$ for all $p \in Prop$.

The updated model after an action is formalised as a product of a Kripke model 
with an action frame 
\citep{baltag1998logic, bek2006change}. 

\begin{definition}[Product update]
\label{def:product-update}
Given a pointed Kripke model $(M, s)$ 
and a pointed action model $(U,u)$, the resulting pointed Kripke model after the execution of the action, 
$M \otimes U$ is defined as $ (M',(s,u))$ with 
$M' = (S' , R', I')$, where 
\begin{itemize}
    \item $S' = \{(s,u) \in (S \times E) \mid ~ (M,s) \models pre(u)\}$
    \item $I'(p) = \{(s,u) \in S' \mid (s \in I(p) \mbox{~and~} post(u)(p)=no), \mbox{~or~} post(u)(p)=\top\}$
    \item $R'_a = \{((s,u),(t,v)) \mid {s}R_a{t},~ u\obs{a}v\}$, for $a$ in $A$
\end{itemize}
\end{definition}

The \emph{ontic} effect of the action is to \emph{set to true} those fluents where the postcondition $post(u)(p)$ is $\top$, \emph{set to false} those fluents where the postcondition $post(u)(p)$ is $\false$ and \emph{leave unchanged} those where it is $no$.

The number of states in the updated Kripke model is the product $\abs{S} \abs{E}$ of
the number of states in the starting Kripke model and the number of events in the 
action model.

Now we define our logical languages. 

\begin{definition}[Formulas of languages EL, DEL-, DEL]
\label{def:logic}
Starting with a set of agents $A$ and fluents $Prop$, the formulas of the logical language DEL are constructed using the following BNF. 
\[\phi ::= \top \mid p \in Prop \mid \neg \phi \mid  
(\phi \wedge\phi) \mid P_i\phi,~i \in A \mid \upd{u}\phi
\mid \updk\phi\] 
The sublanguage DEL- does not have the union update formulas $\updk\phi$.
EL is a sublanguage of DEL- which does not have update formulas $\upd{u}\phi$.
\end{definition}

The modality $P_i\phi$ is read as `agent $i$ possibly believes $\phi$'. The dual modality $B_i\phi = \neg P_i \neg \phi$  is read as 
`agent $i$ believes $\phi$'.
$\upd{u} \phi$ is read as `possibly after update $U$, $\phi$ holds'. 
The dual modality is $[U,u]\phi$.
The length of a formula is defined as usual from its parse tree.
Note that $(U,u)$ is a representation of a pointed action frame, inside a formula.  
The size of that action frame is included in the length of the formula. 

\begin{definition}[Truth at a world in a model] 
\label{def:truth}
Given a formula $\phi$ in language DEL, and a pointed Kripke model $(M,s)$, the assertion that formula $\phi$ is true at world $s$ in model $M$ is abbreviated as $M,s \models \phi$ and recursively defined as:
\begin{itemize}
    \item $(M,s) \models \top$,
    \item $(M,s) \models p$  $\Leftrightarrow$ $ s \in I(p)$, 
    \item $(M,s) \models \neg \phi $ $\Leftrightarrow$  not $ (M,s) \models \phi$, 
    \item $(M,s)  \models (\phi \wedge \psi)$ $\Leftrightarrow$  $(M,s) \models \phi$ and $ (M,s) \models \psi$, and 
    \item $(M,s)  \models P_i \phi $ $\Leftrightarrow$  for some $t$, $sR_it$ and  $(M,t) \models \phi$
    \item $(M,s) \models  \upd{u}\phi $~iff~ $ (M,s) \models pre(u)$ and  $(M \otimes U, (s,u)) \models \phi$.
    \item $(M,s) \models \updk\phi$~iff~$(M,s) \models pre(u_i)$ and $(M \otimes U_i,(s,u_i)) \models \phi$, for some $i,~1 \leq i \leq k$.
\end{itemize}
A formula is \emph{valid} if it is true in all models at all worlds.
A formula is \emph{satisfiable} if it is true in some model at some world.
\end{definition}

With the semantics of the update modalities given using product update, 
the logics for the languages considered in Definition~\ref{def:logic} are called 
$EL$, $\delminusprod$ and $\delprod$, respectively.

We sometimes talk of `agency' by which we mean that
an agent $i$ exists at a possible world $s$ in model $M$ 
if and only if $(M,s) \models P_i \top$.
Thus accessibility relations will not be serial,
as is traditional in the literature.
An agent $i$ exists for another agent $j$ at a world $s$ if $i$'s agency holds at all the worlds $t$ reachable by $j$ from $s$. Formally, $B_jP_i\top$. 

We work only with \emph{transitive} relations $R_i$, hence $B_i\phi \implies B_iB_i\phi$ is a valid formula. It says that positive belief is introspective. 
In our models, $\neg B_i\phi \implies B_i \neg B_i\phi$ is not a valid formula. (As will be seen, 
our product construction does not preserve the Euclidean property required.) 
It says that negative belief is introspective. Such correspondences of valid formulas with properties of Kripke frames are described in the textbook of \cite{chellas}.

For a logic $\mathcal{L}$, the problem of checking satisfiability of its formulas is called $SAT(\mathcal{L})$.
We will also consider the \emph{model checking} problem 
$MC(\mathcal{L})$, which checks, given a pointed Kripke model $(M,s)$ and formula $\phi$ of $\mathcal{L}$, whether $\phi$ is true at world $s$ in $M$.
Figure~\ref{fig:complexityclasses1} shows the upper bounds known about the computational complexity of these problems, 
extended to the case where fluents are used.
The syntax and semantics of the logic at the source of an edge are available in the logic at the target of the arrow.

\begin{figure}[!htb]
    \centering
    \includegraphics[width=0.8\textwidth]{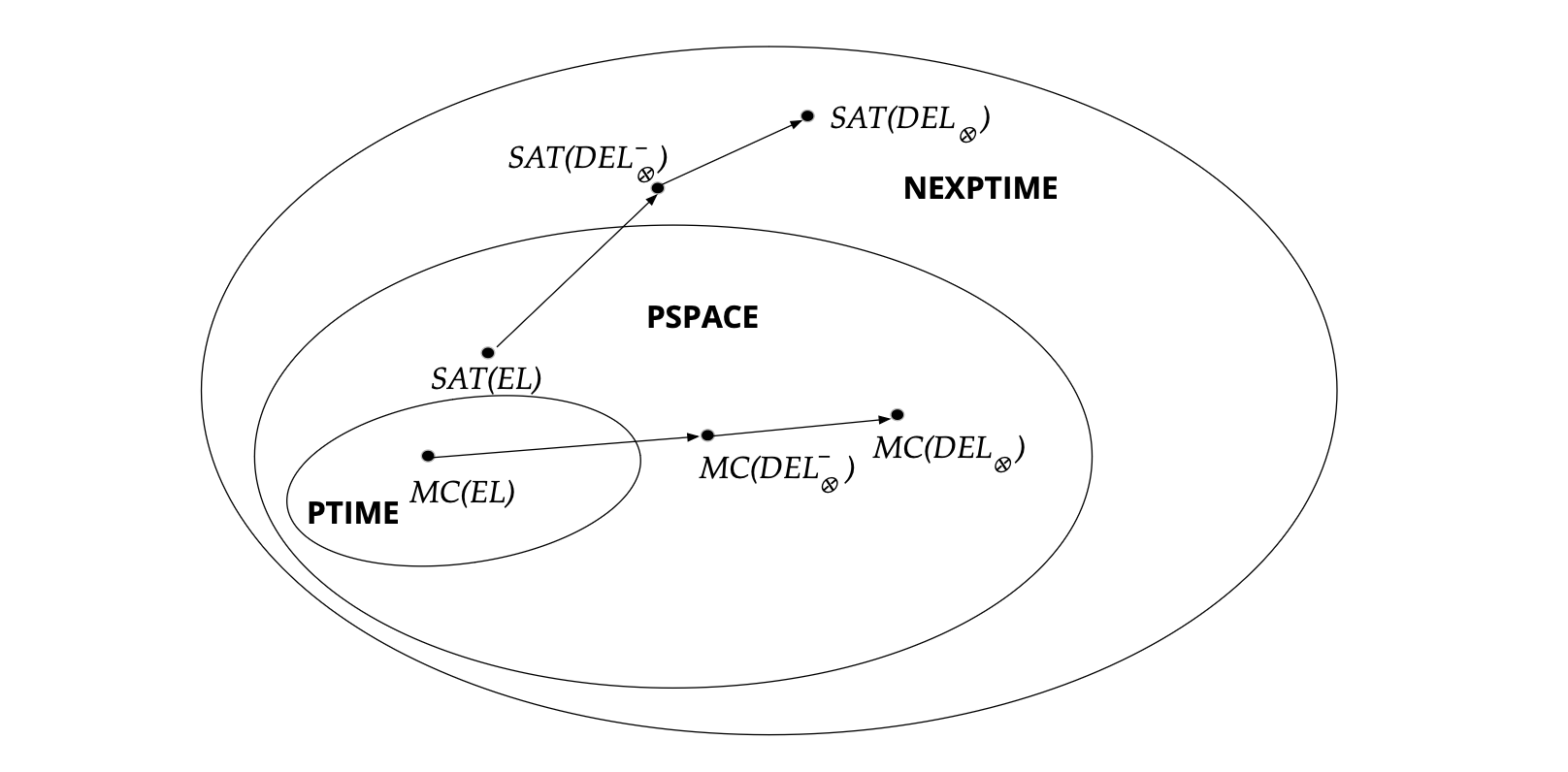}
    \caption{Known upper bounds for $MC(\mathcal{L})$ and $SAT(\mathcal{L})$}
    \label{fig:complexityclasses1}
\end{figure}


Soundness and completeness of $DEL$ with respect to an axiom system are proved in 
\cite{baltag2004logics,bek2006change}.
Decidability of the logic follows.
The complexity of checking validity is nondeterministic exponential time,
established by \cite{aucher2013complexity}.
They also establish complexity of model checking.

The product update framework does not allow one to define actions that can introduce new agents or remove agents from the Kripke models. How would one model the following?

Consider a simple dormitory setting in which the arrival of a new warden is announced to the residents. The agents in the dormitory model are the existing set of residents and staff. 

To take cognizance of the warden, a new agent after this announcement, demands a new kind of update, which is introduced 
in Section~\ref{sec:update-models}.



\section{Agent-Update Frames}\label{sec:update-models}
We formally define \emph{agent-update frames} on a countably infinite set of agents $\mathcal{A}$ and a finite $A\subseteq \mathcal{A}$.
Recall the language EL from Section~\ref{sec:background} and that the set of fluents $Prop$ is fixed.

\begin{definition}[Agent-update frame on $A\subseteq \mathcal{A}$]
\label{def:agentupdate}
An \emph{agent-update frame} is a finite 
structure $U=(E, \{\obs{i} \mid i \in A\}, \{\new{i} \mid i \in \mathcal{A}\}, 
\{\del{i} \mid i \in A\},pre,post)$ with relations for agents $i$ as indicated, 
defined on a finite set of events $E$, together with the 
precondition and postcondition functions $pre: E \to$ EL and $post: E \to (Prop \to \{\top, \false,~no\})$. 
$u\new{i}v$ means that event $u$ adds agent $i$,
we collect such added agents $i$ in the set $Add(u)$.
$u\del{i}v$ means that event $u$ deletes agent $i$,
and $Del(u)$ is the collection of such deleted agents. 
A \emph{pointed agent-update frame} is written as $(U,u)$ where $u \in E$ is a designated event.
\end{definition}

The size of an agent-update frame is as before the sum of the sizes of its components, with $3\abs{A} \abs{E}^2$ for the three kinds of relations.

In pictures, in addition to the traditional (solid) arrows (here denoted as $\obs{i}$) in
an action frame on $A$, we have two other types of arrows: \emph{sum} arrows, dashed, for $\new{i}$, which can range over new agents outside $A$, and \emph{difference} arrows, dotted, for $\del{i}$ on $A$ in the agent-update frames.

We use letters $a,b,i,j,k$ to denote agents, $s,t$ to denote worlds in Kripke frames, and $u,v,w,x$ to denote events in the agent-update frames throughout this article. 

\subsection{Examples}

We start with examples from a dormitory and from a story by \cite{donaldson1999gruffalo}. 

\begin{figure}[!htb]
    \centering
    \includegraphics[width=0.4\textwidth]{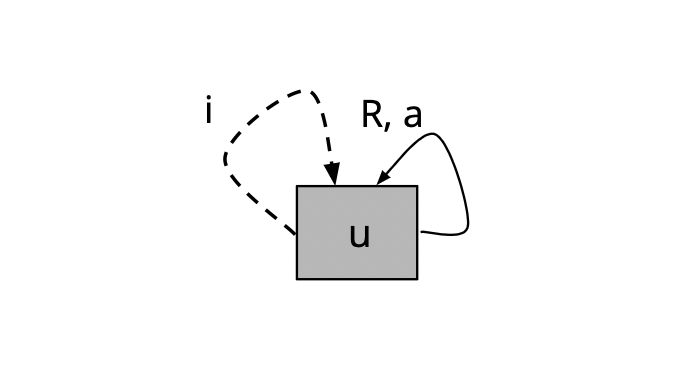}
    \caption{$i$-addition for $R \cup \{a\}$}
    \label{fig:addupdatemodel}
\end{figure}

\begin{example}\label{eg:warden}
Consider the agent-update frame illustrated in Figure~\ref{fig:addupdatemodel} in which event $u$ adds a warden agent $i$, this is shown using a dashed arrow.
The other (solid) arrows define observability of existing agents as usual. 
Let $a$ be some agent, say a security guard, and consider agents in set $R$ as residents of the dorm.
All the agents in $R \cup \{a\}$ observe the event $u$ that adds $i$ in the frame.
\end{example}

\begin{figure}[!htb]
    \centering
    \includegraphics[width=0.7\textwidth]{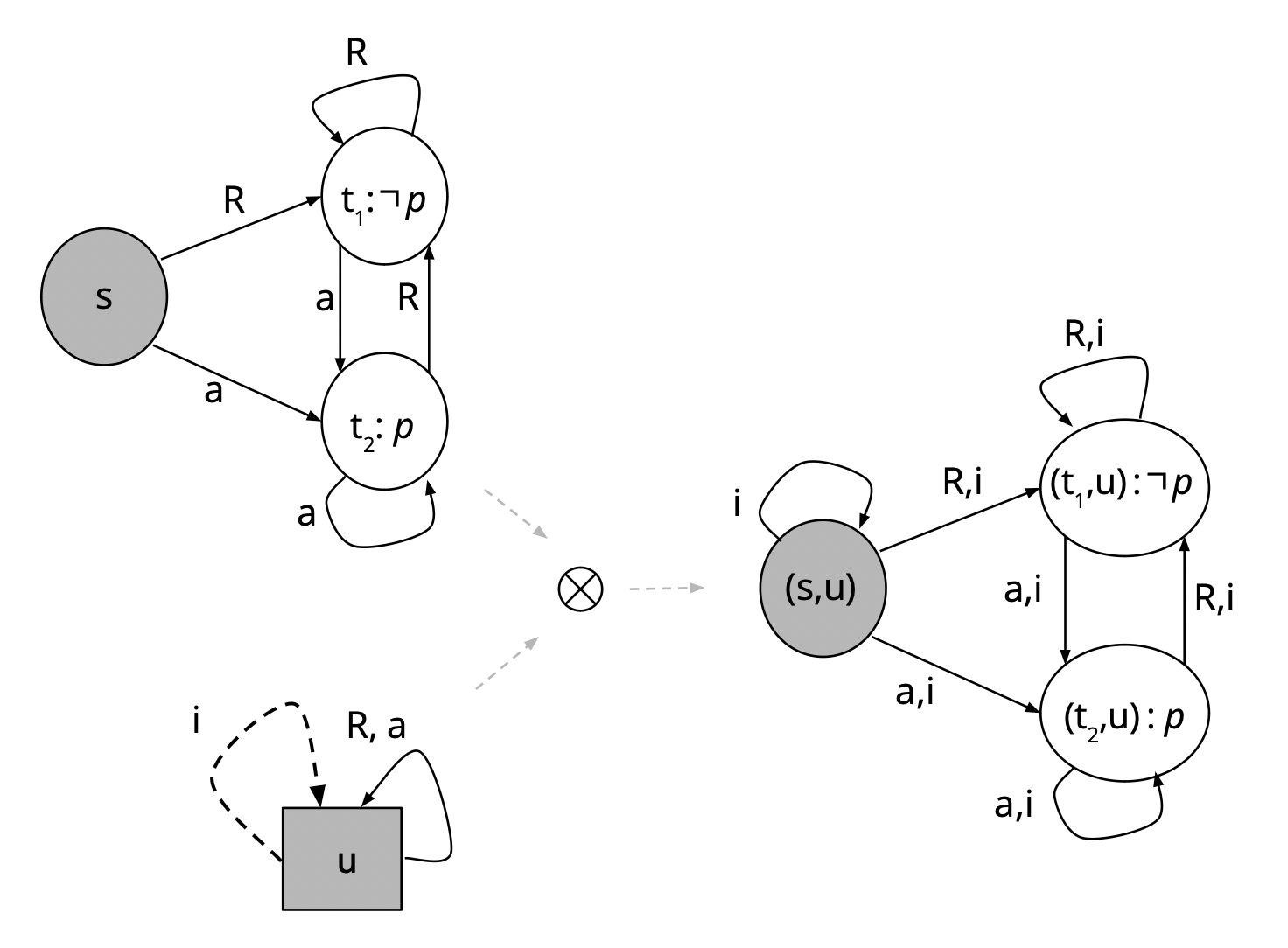}
    \caption{Agency of warden $i$ gets introduced to residents $R$ and guard $a$}
    \label{fig:successwardensumproduct}
\end{figure}

Suppose the initial situation in the dorm is depicted by the Kripke model in Figure~\ref{fig:successwardensumproduct} where at state $s$, the guard $a$ believes the fluent $p$, but the residents $R$ do not. 
What beliefs should $i$ be ascribed with after event $u$ is executed in state $s$?

Our approach is to make the warden inherit the beliefs commonly held by the residents and the guard,
except where their beliefs differ from the warden's. 
One reason is that group beliefs about the dorm, such as the existence of its residents, are automatically communicated to the warden.
Think of all these beliefs being listed on the group's webpage.
The issue of beliefs implicitly held by a group is more tricky, we do not consider it.

The effect of the agent-update in Example~\ref{eg:warden} is depicted in Figure~\ref{fig:successwardensumproduct}. 
At $(s,u)$, agent $i$ inherited arrows of $R$ and $a$ from world $s$ to $t_1$ and $t_2$, respectively. Therefore, the warden does not inherit any belief about $p$.

\begin{example}\label{eg:noneuclidean}
Modifying the earlier example, suppose there is no guard. 
The warden, believing $p$ in the initial structure, is introduced to the residents $R$, none of whom believes $p$.
The effect of the update will be that the warden's belief in $p$ is weakened, 
conceding that it is possible to believe $p$ as well as $\lnot p$.
\end{example}


\begin{figure}[!htb]
    \centering
    \includegraphics[width=0.4\textwidth]{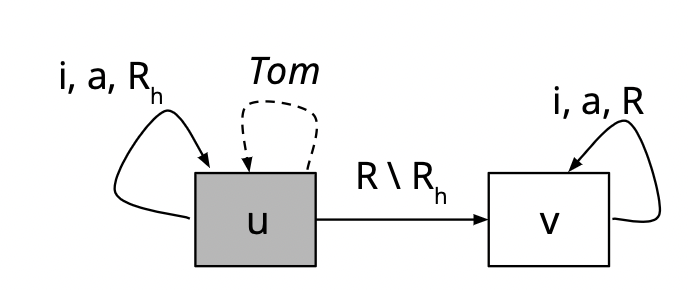}
    \caption{$Tom$-addition for $R_h \cup \{i,a\}$}
    \label{fig:privateaddupdatemodel}
\end{figure}

\begin{example}\label{eg:Tom}
Suppose a new resident, say $Tom$, joins a hall $h$ in the dorm. The residents of the hall $R_{h}$ witness his arrival, as do the warden $i$ and the security guard $a$, as indicated by the dashed arrow. Other residents $R\setminus R_{h}$ are unaware of the agency of $Tom$ at this moment. Such a private agent update is illustrated in Figure~\ref{fig:privateaddupdatemodel}. 
Agents in $R_h$, $i$ and $a$ see $Tom$ being added at $u$. 
Agents in $R\setminus R_{h}$ are oblivious, they believe that beliefs of $R$, $i$ and $a$ are unchanged.
\end{example}

\begin{figure}[!htb]
    \centering
    \includegraphics[width=0.9\textwidth]{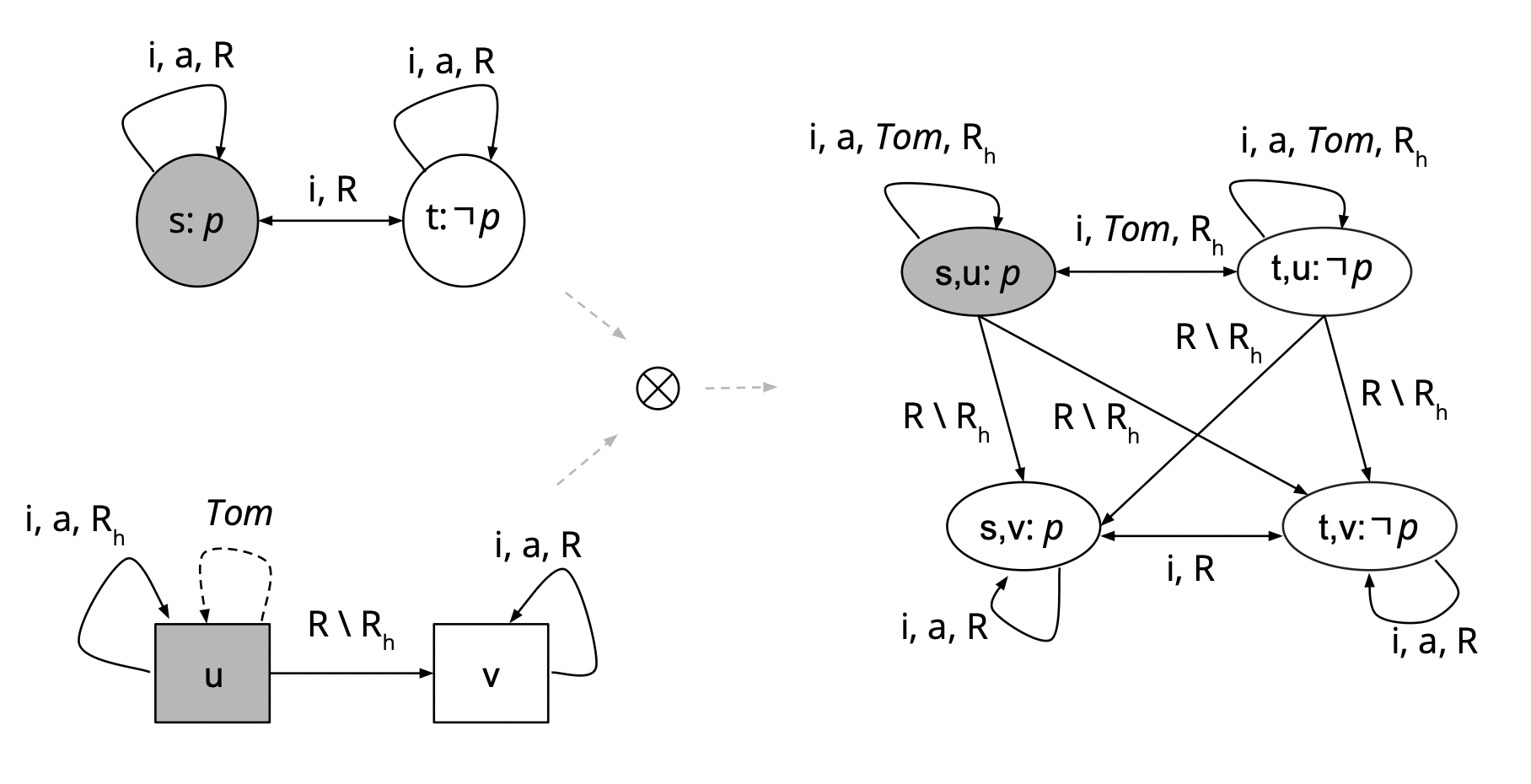}
    \caption{$Tom$ joins the dorm, only residents in $R_h$ and $\{i,a\}$ observe the event}
    \label{fig:successtimsumproduct}
\end{figure}

In Example~\ref{eg:Tom}, only warden $i$, guard $a$ and residents $R_h$ become aware of $Tom$ joining the dorm (event $u$), therefore $Tom$ inherits the possible worlds from $i$, $a$ and $R_h$ from the input model into the updated model as shown in  Figure~\ref{fig:successtimsumproduct}. 
Beliefs of $R \setminus R_h$ remain static (do not change after the update), while beliefs of $R_h \cup \{i,a\}$ remain static except for the beliefs about the agent being added: $Tom$.

\begin{figure}[!htb]
    \centering
    \includegraphics[width=0.4\textwidth]{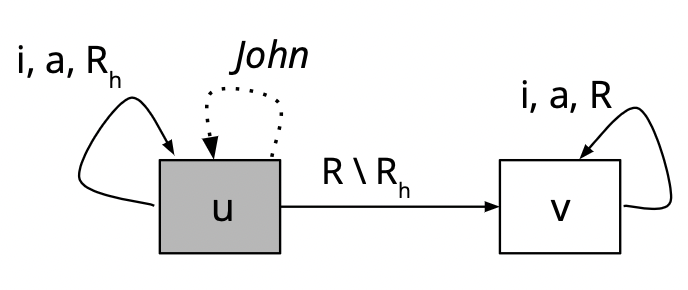}
    \caption{$John$-removal for $R_h \cup \{i,a\}$}
    \label{fig:privatedelupdatemodel}
\end{figure}

\begin{figure}[!htb]
    \centering
    \includegraphics[width=0.8\textwidth]{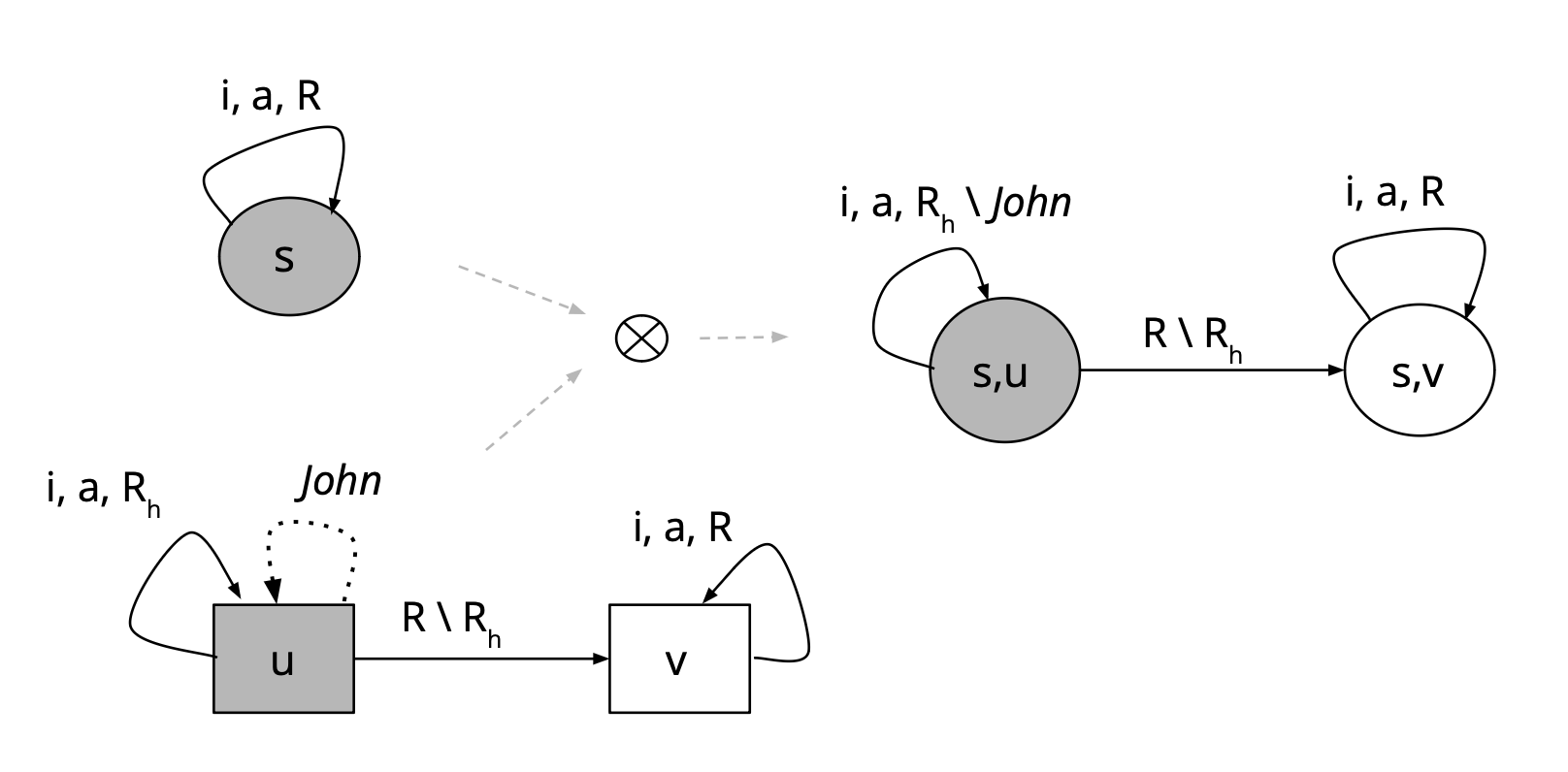}
    \caption{$John$'s departure observed by residents in $R_h \cup \{i,a\}$}
    \label{fig:successjansumproduct}
\end{figure}

\begin{example}\label{eg:John}
Similarly a private deletion of an agent happens when one of the residents, say $John$, leaves the dorm, and the departure is observed only by a subset of residents $R_h$ and $\{i,a\}$, the rest being oblivious of the departure. In Figure~\ref{fig:privatedelupdatemodel}, the dotted self-loop for $John$ at $u$ denotes that resident $John$ is to be deleted. The agents in $R_h \cup \{i,a\}$  observe this event at $u$. Beliefs of rest of the residents in $R \setminus R_h$ do not change. 
In particular, they continue to believe that $R \cup \{i, a\}$ believe in the agency of $John$ at $v$.
Model update is shown in  Figure~\ref{fig:successjansumproduct}. 
\end{example}

\begin{figure}[!htb]
    \centering
    \includegraphics[width=0.45\textwidth]{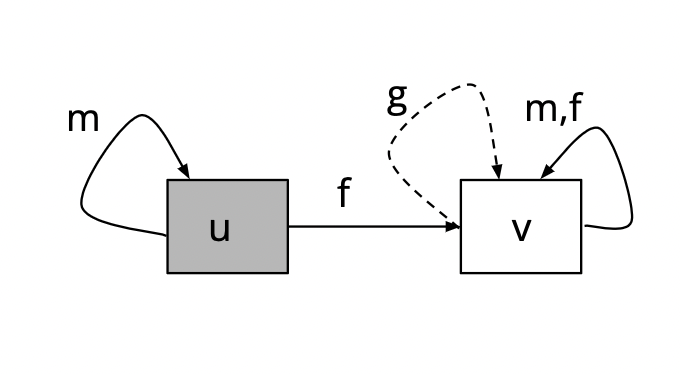}
    \caption{Mouse $m$ deceives fox $f$ about a gruffalo $g$}
    \label{fig:dcvaddupdatemodel}
\end{figure}

\subsection{A children's story}
\label{sec:story}
To introduce deception, we look at \emph{The Gruffalo} \citep{donaldson1999gruffalo}. 
This children's story has a clever mouse who invents a fictitious creature called gruffalo, in order to scare some predators away. 
The fox, the owl and the snake in turn are duly convinced into fleeing as this creature likes to eat the others. 
But then the story takes a curious turn when the mouse runs into the gruffalo in flesh and blood. 
The clever mouse finds a way to its safety in a different manner this time. 
It boasts to the gruffalo that the fox, the owl and the snake are scared of it as the mouse is the most dangerous creature in the jungle. 
It takes the gruffalo to the fox, which flees on seeing the gruffalo in flesh and blood. 
The gruffalo understands this to be because the fox is afraid of the dreaded mouse.
After this episode is repeated with the owl and the snake too, the gruffalo itself flees.

\begin{example}\label{eg:gruffalo}
Two events in the agent-update frame shown in Figure~\ref{fig:dcvaddupdatemodel} depict
the mouse's action of misleading the fox into believing a fictitious gruffalo: $u$ as perceived by the mouse $m$ and $v$ as perceived by the fox $f$. 
The presence of a dashed self-loop on event $v$ means that agent $g$ (the gruffalo) is added by $v$. 
\end{example}

\begin{figure}[!htb]
    \centering
    \includegraphics[width=0.9\textwidth]{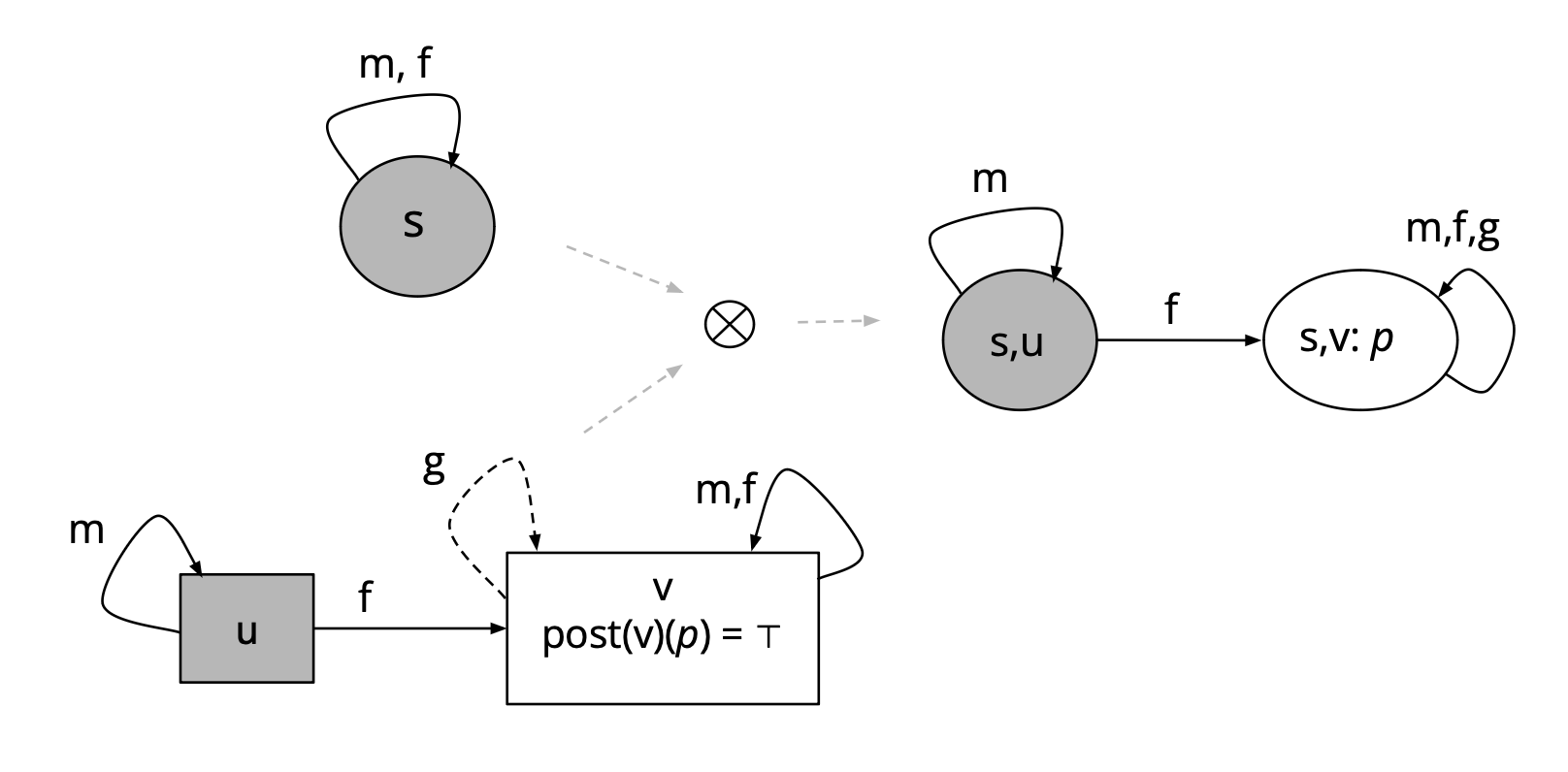}
    \caption{Deceptive update about gruffalo's agency}
    \label{fig:successgruffalosumproduct}
\end{figure}

In Example~\ref{eg:gruffalo} the fox $f$ is deceived into believing in the agency of gruffalo $g$ after mouse $m$'s announcement, while $m$ itself does not believe in $g$'s agency, as shown in Figure~\ref{fig:successgruffalosumproduct}.
In the fox's imagination, gruffalo believes the proposition \emph{p} (gruffalo eats fox), 
 a belief induced by $m$. 
So a new belief is ascribed to gruffalo which is not inherited.

Similarly, we can have a deceptive deletion of an agent, which we leave to the reader.
Section~\ref{sec:detailed-gruffalo} looks at the story in more detail, showing how beliefs are ascribed to new agents in more complex scenarios. 

In our doxastic models, agency at a state $s$ is about the belief that an agent exists.
Thus whether an existing agent $Tom$ arrives at a scene,
or a fictitious agent $g$ `arrives' in the imagination of an agent (at a state in the model representing beliefs of the agent), both are treated in the same way. 
The distinction is the following:
$Tom$'s arrival is witnessed by the agents present,
and their beliefs regarding his presence change;
the residents in the other hall do not witness his arrival, and their beliefs regarding his presence do not change.
Belief in the gruffalo $g$ enters the mind of the fox $f$, and there is the mouse $m$ who `witnesses' this change of belief, yet its belief about the non-existence of $g$ does not change.

We identify a set of agents whose beliefs remain unchanged at an event in an agent-update frame.

\begin{definition}[Observer]\label{def:observer}
The set of \emph{observers} $Obs(u)$ at an event $u$ in an agent-update frame is those $j$ with agency at $u$ such that $u\obs{j} v \iff v=u$.
\end{definition}


\emph{Deceivers} are certain observers who remain unmoved by changes in beliefs of observed agents,
we use the term informally.
In planning applications \citep{khemani},
other details such as actors (the agent performing the action), location and visibility may have to be provided \citep{singh2023two}.

\subsection{Deception down the line}
\label{sec:obsdec}

\subsubsection{Observing deception}

\begin{figure}[!htb]
    \centering
    \includegraphics[width=0.55\textwidth]{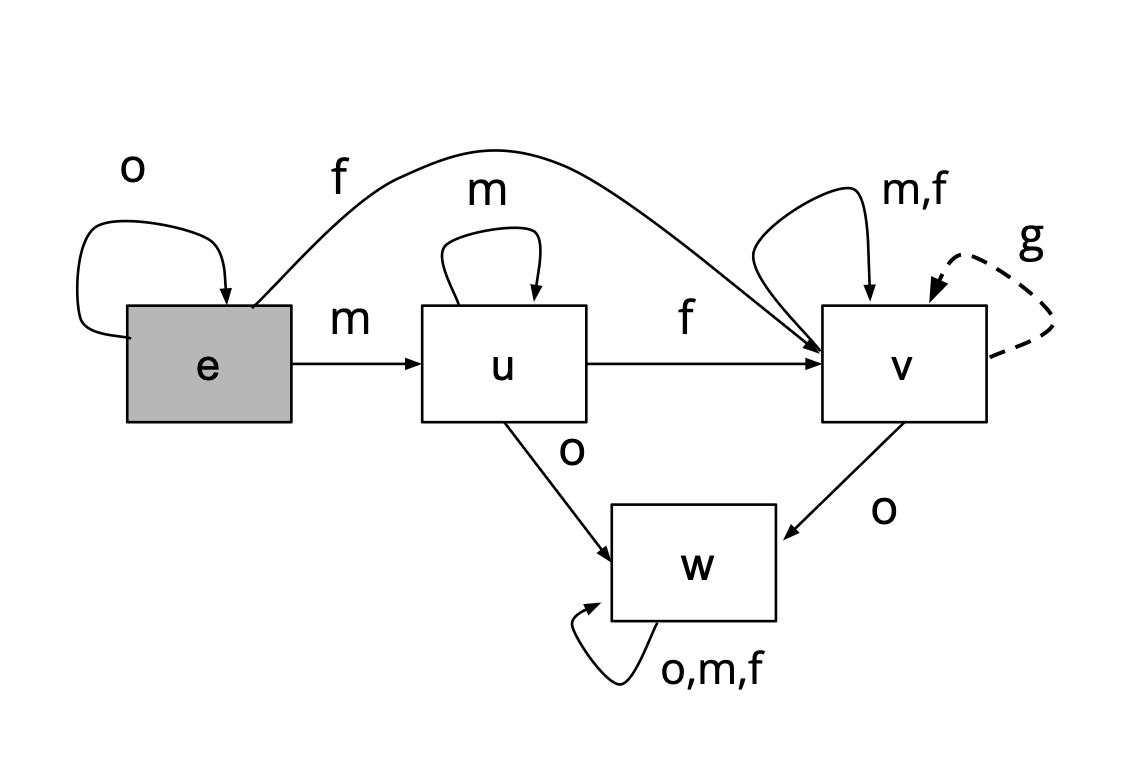}
    \caption{Owl observes mouse deceiving fox}
    \label{fig:doubledeception}
\end{figure}

Consider a wise owl $o$ observing deceiver $m$ (oblivious of the owl's presence because owl is eavesdropping) deceiving $f$ into believing in the addition of agent $g$, 
as illustrated in the agent update frame illustrated in Figure \ref{fig:doubledeception}. Event $e$ encodes the observation of deception by $o$. At $u,v$, $m$ and $f$ consider $o$ to be oblivious and therefore $o$ is considered observing a $skip$ by both. $m$ considers $u$ where it tries to deceive $f$.

\subsubsection{Simultaneous deception}

\begin{figure}[!htb]
    \centering
    \includegraphics[width=0.6\textwidth]{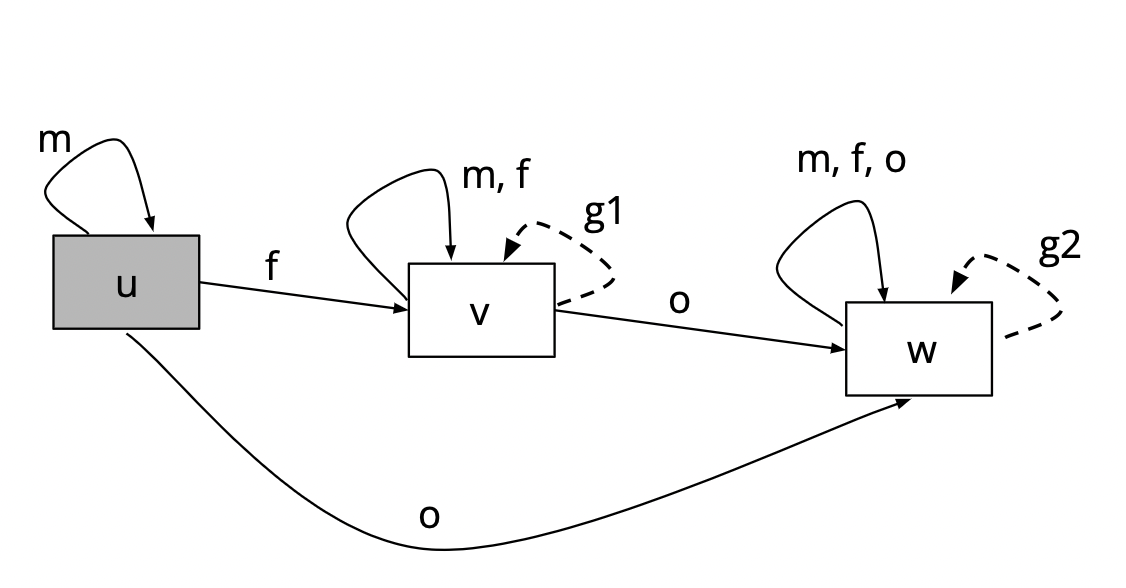}
    \caption{$f,m$ deceive $o$ while $m$ deceives $f$}
    \label{fig:simul-deception}
\end{figure}

In the agent update frame illustrated in Figure \ref{fig:simul-deception},
while $f$ is being deceived by $m$ into believing in the addition of agent $g1$, an owl $o$ is deceived by $f$ and $m$ into believing in the addition of agent $g2$.

We can also make $m$ oblivious of the deceit carried out by $f$.

By now the idea should be clear: arbitrary levels of observation and deception can be modelled.
More work is needed to better formalise this observation.

Next we put together all that we have seen so far into a single operation generalising product update.
The specific updates which we saw above 
contributed significantly to the formulation of the next definition.

\subsection{Sum-product update}

We define sum-product update to describe belief update for existing agents, to ascribe beliefs to newly added agents, and to drop beliefs of deleted agents. 
During model transformation, for an existing agent $a$, possible worlds for an agent in the updated model include the (unforgotten) worlds it considered possible earlier.
In world $(s,u)$ (after execution of event $u$ in world $s$) of the product model,  another world $(t,v)$ is possible if and only if $t$ is possible from $s$, and $v$ is possible from $u$. 
For the agent $i$ being added by
an agent-adding event $u$ ($i \in Add(u)$), the worlds that $i$ considers possible at $(s,u)$ are \emph{observer-dependent} (see item \emph{inherited} below). 

The beliefs of the existing agents are determined by product, the beliefs
of the newly added/deleted agents are determined by sum/difference. 
We describe the transformation of a model on $A$ when an agent-update frame on $\mathcal{A}$ is applied 
to it, and we call it \textit{sum-product update}.
An agent's deletion takes priority over its addition.

\begin{definition}[Sum-product update]
\label{def:sum-product}
Given a pointed Kripke model $(M,s)$ on agents $A$
and a pointed agent-update frame $(U,e)$ with $U = (E, \obs{}, \new{}, \del{}, pre, post)$ 
on agents $\mathcal{A}$, 
the updated pointed Kripke model $(M * U,(s,e))$, is defined as: 
$(S',\{R'_a \mid a \in A'\},I')$ on the updated set of agents $A'$ (those $a$ such that $R'_a$ is nonempty), where:
\begin{itemize}
    \item $S' = \{ (s,u) \in S \times E\mid M,s \models pre(u) \}$
    \item $I'(p) = \{(s,u) \in S' \mid (s \in I(p) \mbox{~and~} post(u)(p)=no), \mbox{~or~} post(u)(p)=\true\}$
    \item $R'_a$ is the transitive closure of 
    $(Q_a^{unf} \cup Q_a^{asc} \cup Q_a^{inh})$, where:
    \begin{description}
    \item[unforgotten:] $(s,u)Q_a^{unf}(t,v) \iff  
    s R_a t \mbox{~and~} u \obs{a} v \mbox{~and~not~} u \del{a} v$
    \item[ascribed:] $(s,u)Q_a^{asc}(s,v) \iff u\new{a}v,
		    \mbox{~for~} a\in (Add(u) \setminus Del(u))$
	\item[inherited:] $(s,u)Q_a^{inh}(t,u) \iff s R_{Obs(u)} t, 
		    \mbox{~for~} a\in (Add(u) \setminus Del(u))$
    \end{description}
\end{itemize}
\end{definition}

\noindent
The updated Kripke model is again constant domain,
but over the new set of agents $A'$.
The number of states in it is the product $\abs{S} \abs{E}$ of the number of states in the starting Kripke model and the number of events in the update frame.
Its size will be asymptotically dominated by the size of its relations $3\abs{A'} \abs{S}^2 \abs{E}^2$.

The accessibility relation for an agent, say $a$ in the transformed model, is defined such that after execution of an event $u$ in the world $s$, it considers a world $(t,v)$ possible at world $(s,u)$ under the following conditions, which elaborate on the formal definition above.
\begin{enumerate}
    \item If $a$ is neither being added nor deleted, this is the product of arrows of Definition~\ref{def:product-update}.
    We call the beliefs accessible in this way \emph{unforgotten}.
    \item If $a$ is being deleted (which takes priority over addition,
    then the beliefs accessible by undeleted $a$-arrows are also called \emph{unforgotten}.
    \item If $u$ is an $a$-adding event and not simultaneously
    an $a$-deleting event, there are three cases.

\begin{enumerate}
    \item If $a$-arrows were already present, they are handled using product. These are again \emph{unforgotten} beliefs.
    \item An explicit $a$-adding arrow is from event $u$ to event $v$, both executable at $s$. New beliefs at $v$
    are \emph{ascribed} by the updating agent.
\item Worlds $t$ considered possible from world $s$ by an observer at $u$, provided $pre(u)$ holds in $s$ are \emph{inherited} from the observer. 
\end{enumerate}
\end{enumerate}

Note that inheritance is asymmetric. 
If the agent being introduced to the observers already had beliefs, its beliefs are not communicated to the observers.
This is required to model lying, see \citep{sakama2010logical,Van2012lying}.
After the update in Example \ref{eg:noneuclidean}, the Euclidean property fails,
$P_i p \land \lnot B_i P_i p$.

To summarize, an agent $a$ being introduced can get beliefs in two different ways. 
The first is a direct ascription, a technique from modal logic: 
the update links to a possible world where $P_a p$ will hold for belief $p$.
One can think of introducing the new agent with an announcement that $a$ believes $p$.
The Gruffalo story in Section \ref{sec:detailed-gruffalo} has such examples of directly introducing a new agent to an existing agent. 
The second way is a \emph{default}, a technique from artificial intelligence \citep{khemani}: 
an inheritance of the common beliefs of the group it is being introduced to. 
This is required in more anonymous examples, for example when a new user (about whom not much is known) joins a Whatsapp group.
In modal logic this will require a more complex formula.
Allowing both ideas enhances applicability of the update to a wider variety of situations.

Why should the persons to whom $a$ is being introduced to,
the observers, play a role?
Consider that one of the propositions under discussion is $q$, that grass is green.
When a gruffalo enters the story, in the model one has to answer the question whether the gruffalo believes ``common sense'' propositions such as $q$, that grass is green.
A quick mechanism is needed to resolve such pragmatic questions.
Default reasoning from AI planners is used here.


\section{The story formalised}\label{sec:detailed-gruffalo}

We have seen many one-step agent-update examples. 
In this section, 
we model the story in Section~\ref{sec:story} to formally show agency-creation of \textit{gruffalo}, 
first for the \textit{fox} and \textit{owl}, then for the \textit{mouse}, 
following the course of events presented in the story.
The story is a little more complex than the scope of our models.
It uses magical realism which is difficult to model within a propositional modal framework like ours, 
which does not distinguish between appearance and reality.
Fortunately we are able to use a trick to show how the gruffalo deceives itself.

Action frames are used to model updates and not reasoning.
Following the book (which is aimed at children), there is no suggestion that the mouse understands predator-prey theory, nor that this is a precondition for deception.
A theory of what kind of \emph{causes} 
(\citep{lin1995ramification} studies these formally) could lead to actions such as agent addition and deletion, is beyond the scope of this paper. 
Action frames do not even specify which agent(s) perform the action.
We do not use a planning domain \citep{mcdermott2000}, there is no location or time in an action frame, 
so we do not attempt to adequately model actions such as `fox runs away', leaving them to the imagination of the reader.

\subsection{The fox is deceived}
The initial situation in the story is modelled with $M_0$ with a designated world $s$ as shown in Figure~\ref{fig:mousedcvsfox}.
There are several agents in the story. We begin with three: mouse $m$, fox $f$ and owl $o$, and all three believe in each others' agency. 

\begin{figure}[!htb]
    \centering
    \includegraphics[width=0.9\textwidth]{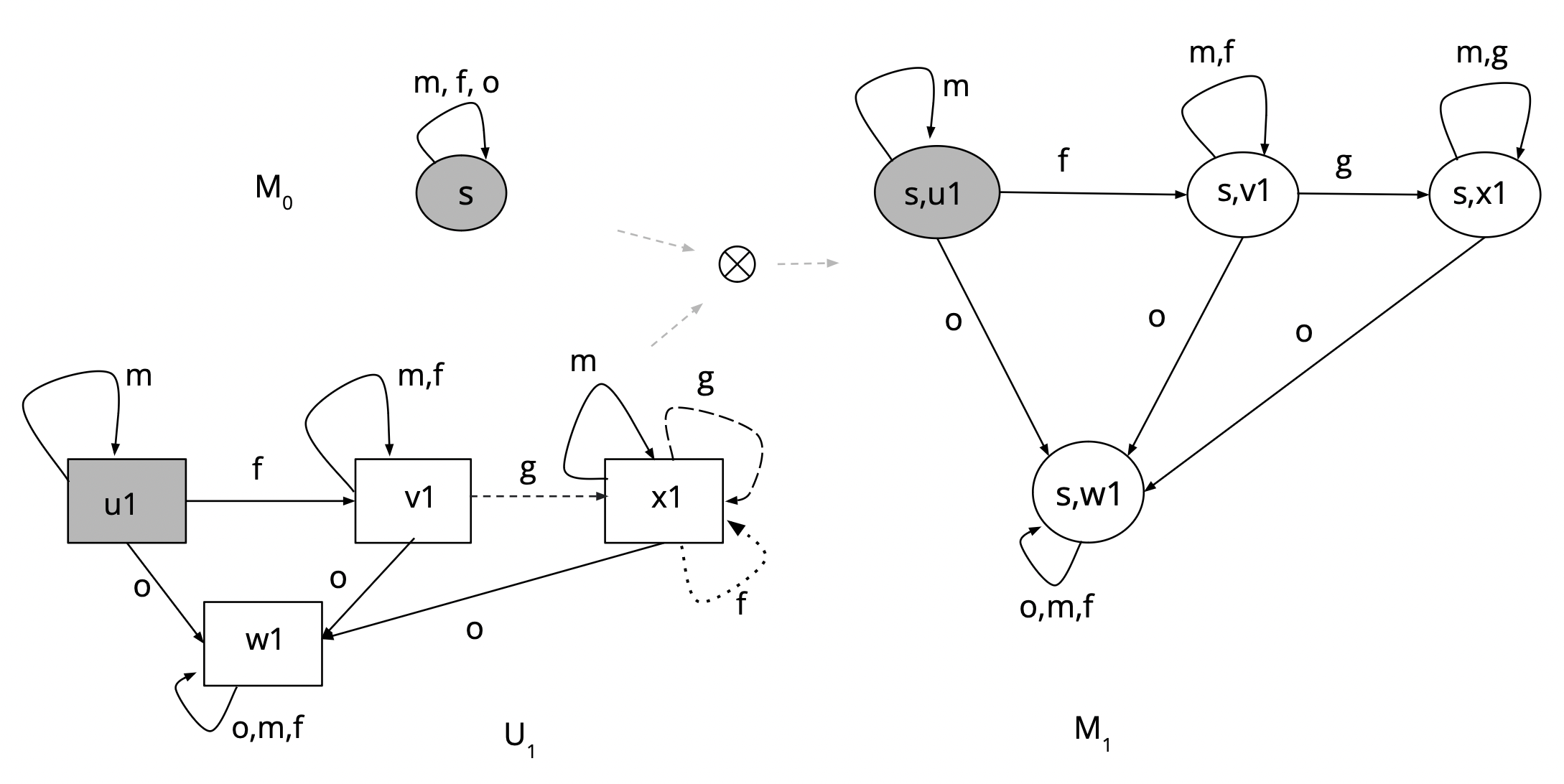}
    \caption{Mouse deceives fox that there is a fox-eating gruffalo}
    \label{fig:mousedcvsfox}
\end{figure}

\begin{example}
The first move of $m$ is a deceptive agent-addition action which introduces the agency of a gruffalo $g$ which, $m$ announces, likes to eat fox. 
(This was modelled using a postcondition in Example \ref{eg:gruffalo}.)
In Figure~\ref{fig:mousedcvsfox} this move is modelled as a 
combination of an addition and a deletion event,
at $u1$ the mouse deceives the fox into believing in the agency of the gruffalo; at $v1$, the fox believes the gruffalo believes in eating foxes, which we represent as a deletion of fox at $x1$.
The owl at $w1$ is oblivious of this interaction.
\end{example}

\begin{figure}[!htb]
    \centering
    \includegraphics[width=0.9\textwidth]{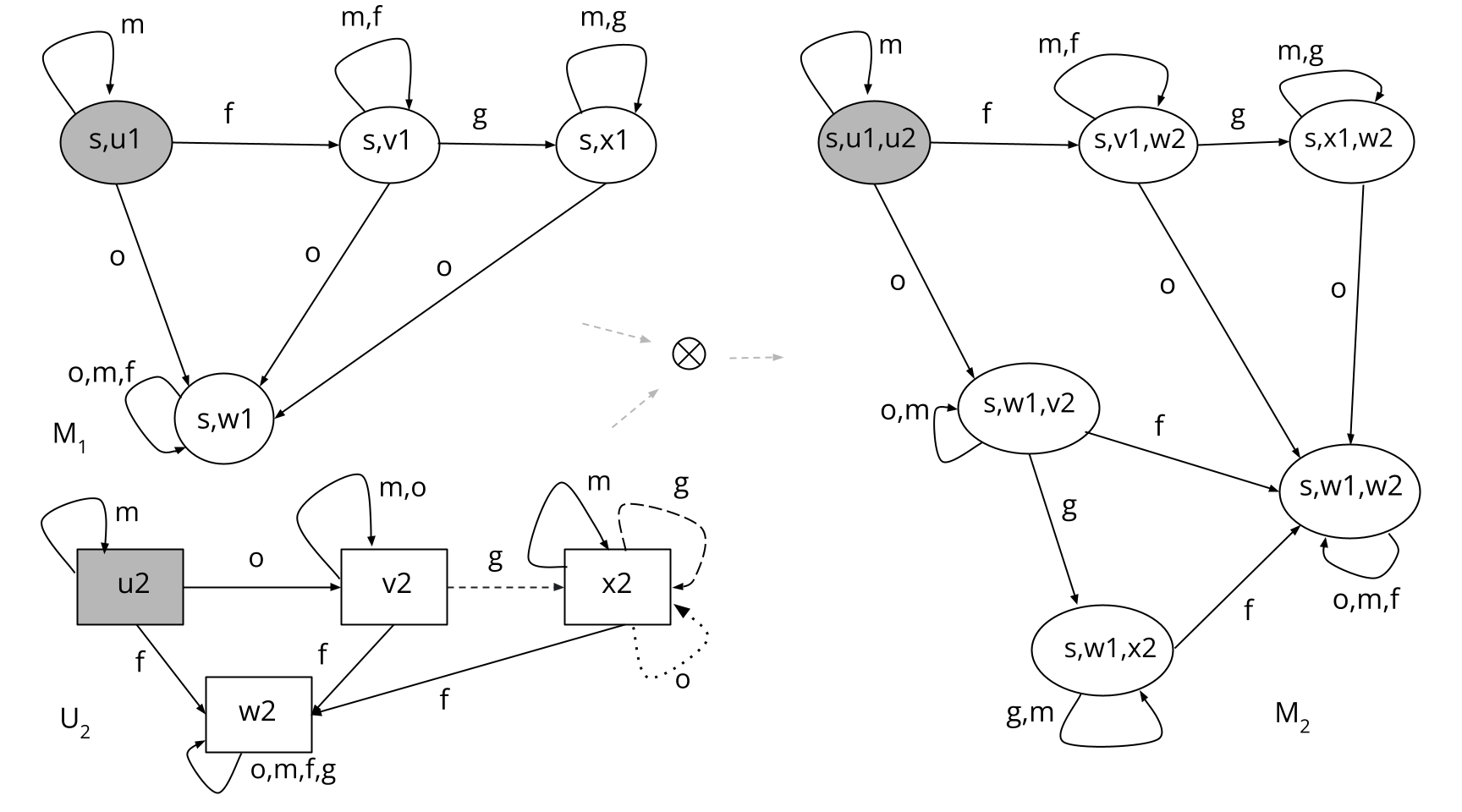}
    \caption{Mouse deceives owl that there is an owl-eating gruffalo}
    \label{fig:mousedcvsowl}
\end{figure}

\subsection{The owl is deceived}
\begin{example}
Next when the mouse runs into the owl, it makes a similar deceptive move again, 
at $u2$ the owl is deceived into believing in the agency of the gruffalo with fox being oblivious of the interaction;
at $v2$ the owl believes that the gruffalo believes in eating owls.
The model update is shown in Figure~\ref{fig:mousedcvsowl}.
\end{example}

Note that the owl starts believing in the agency of gruffalo but it does not know of the beliefs of the fox. Likewise the fox believes in the agency of gruffalo and it believes that the owl does not. 

In the book by \cite{donaldson1999gruffalo}, there is also the deception of a snake on similar lines,
which we do not get into. We have not introduced the snake as an agent at all.

\subsection{The gruffalo appears}\label{sec:appears}

Further in \emph{The Gruffalo}, Donaldson inventively mixes in magical realism.

\begin{figure}[!htb]
    \centering
    \includegraphics[width=0.9\textwidth]{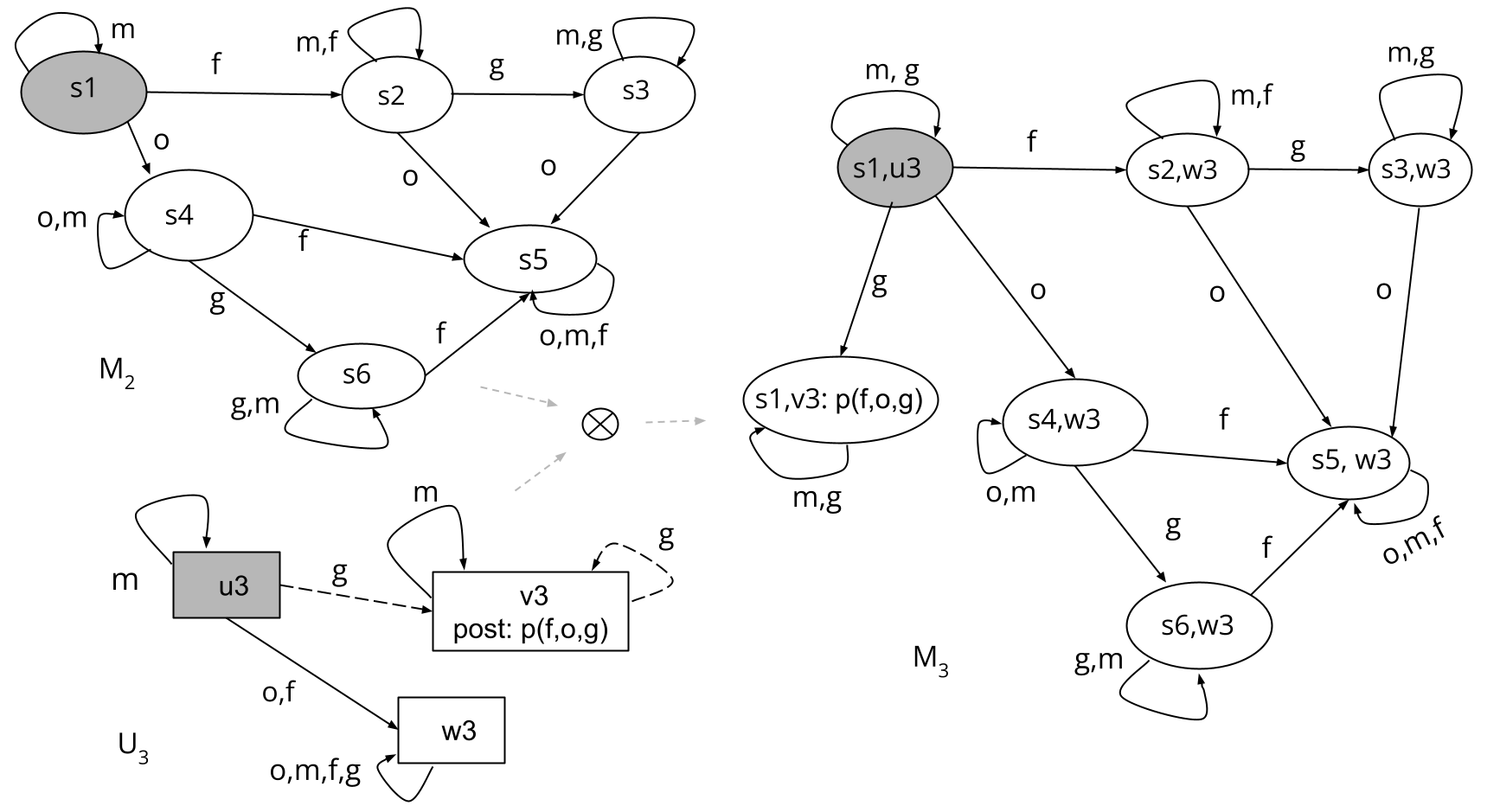}
    \caption{Gruffalo appears for mouse}
    \label{fig:realgruffalocombined}
\end{figure}

\begin{example}
\label{ex:realg}
The mouse runs into a `real' gruffalo, illustrated in Figure~\ref{fig:realgruffalocombined}. 
We model this as a private $g$-addition-update 
about $g$'s agency for mouse while the beliefs of fox and owl remain static. 
To save space, worlds $(s,u1,u2), (s,v1,w2), (s,x1,w2), (s,w1,v2),$ $ (s,w1,w2)$ and $(s,w1,x2)$ in model $M_2$ (Figure~\ref{fig:mousedcvsowl}) are renamed as $s1, s2, s3, s4, s5$ and $s6$ in Figure~\ref{fig:realgruffalocombined}.
\end{example}

We use a \emph{postcondition} fluent $p(f,o,g)$ at event $v3$, see update $U3$ in Figure~\ref{fig:realgruffalocombined}.
It stands for an implication
$implies(threat(f),threat(o),threat(g))$ which mouse communicates to gruffalo,
that if both fox and owl feel threatened by mouse,
then gruffalo is threatened by mouse. 
(It is not written as an EL formula because the fluents hold across different scenes of our story.)
This is one of the kinds of deception by lying considered by \cite{sakama}.


\begin{figure}[!htb]
    \centering
    \includegraphics[width=0.95\textwidth]{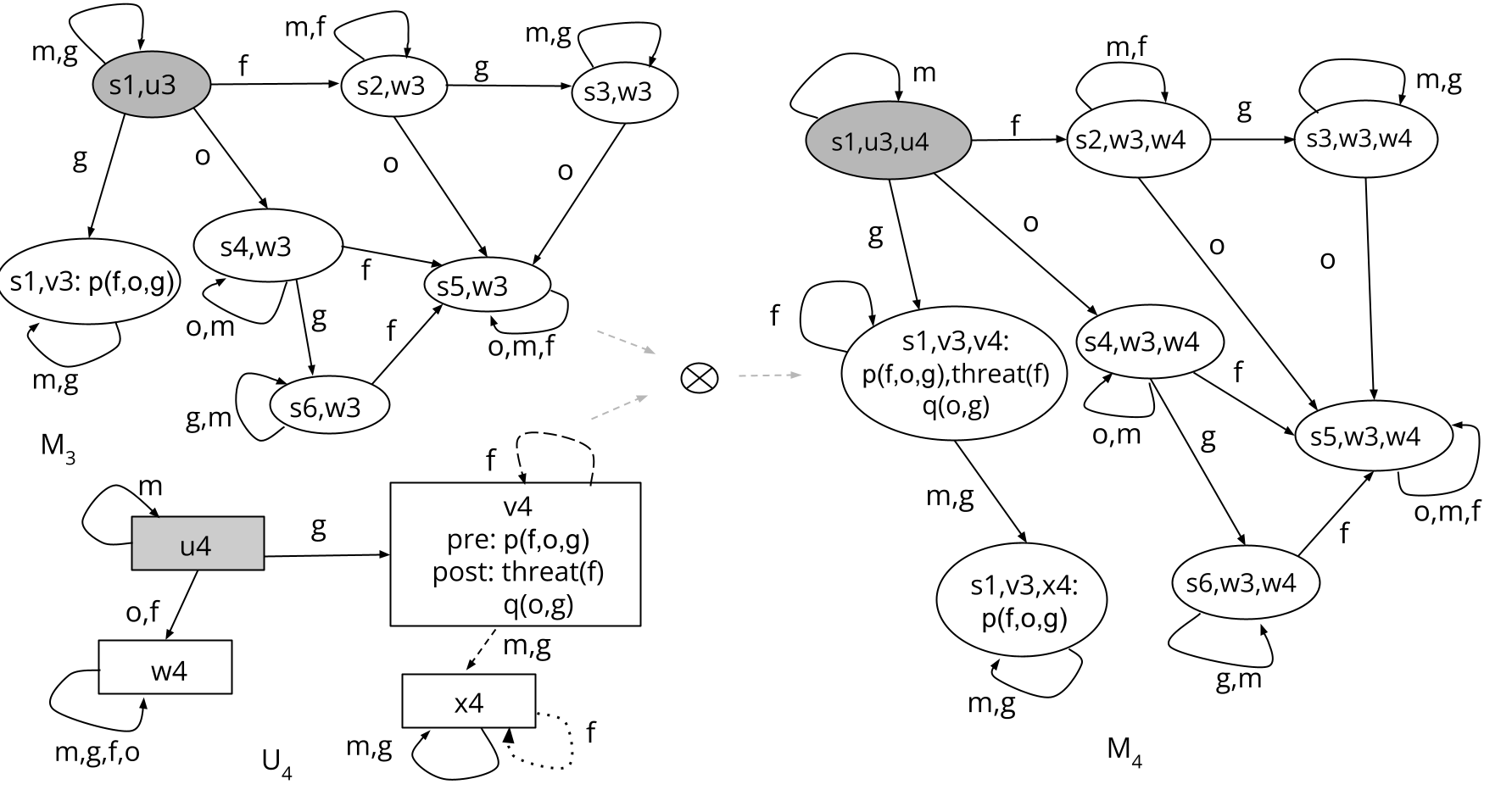}
    \caption{Mouse takes gruffalo to meet fox}
    \label{fig:gmeetsf}
\end{figure}

\subsection{Gruffalo meets fox}\label{sec:gmeetsf}

\begin{example}\label{ex:gmeetsf}
Further in the story, the mouse introduces  gruffalo to fox.
Mouse is an observer of event $u4$ at which $g$ observes $f$-addition at $v4$. 
Owl is oblivious at $w4$.
Fox's belief that it will be eaten away 
is unforgotten at $x4$
in the agent-update frame at the bottom of Figure \ref{fig:gmeetsf}. 
Hence the fox runs away as soon as it sees them.
Gruffalo's belief is that the fox, appearing at $v4$, ran away because it encountered them, 
these are the $m,g$-addition arrows at $v4$ which go to another world $x4$, where fox is ``eaten''. 
There is an ambiguity here between the $m,g$-arrows that the story exploits.
Fox believes $P_g \lnot P_f \true$.
Gruffalo chooses to believe $P_f P_m \lnot P_f \true$ at $(s1,v3,v4)$. This is represented as fluent $threat(f)$ at $v4$.
\end{example}

The fluent $p(f,o,g)$ of Section~\ref{sec:appears} is a \emph{precondition} to $v4$. 
In the presence of the fluent postcondition $threat(f)$,
at $v4$ there is a derived postcondition $q(o,g)$, a fluent standing for the implication $implies(threat(o),threat(g))$,
that if owl feels threatened by mouse,
then gruffalo is threatened by mouse.
\cite{singh2021mental} calls this an indirect or ramified effect of the action $U4$. 

\begin{figure}[!htb]
    \centering
    \includegraphics[width=0.9\textwidth]{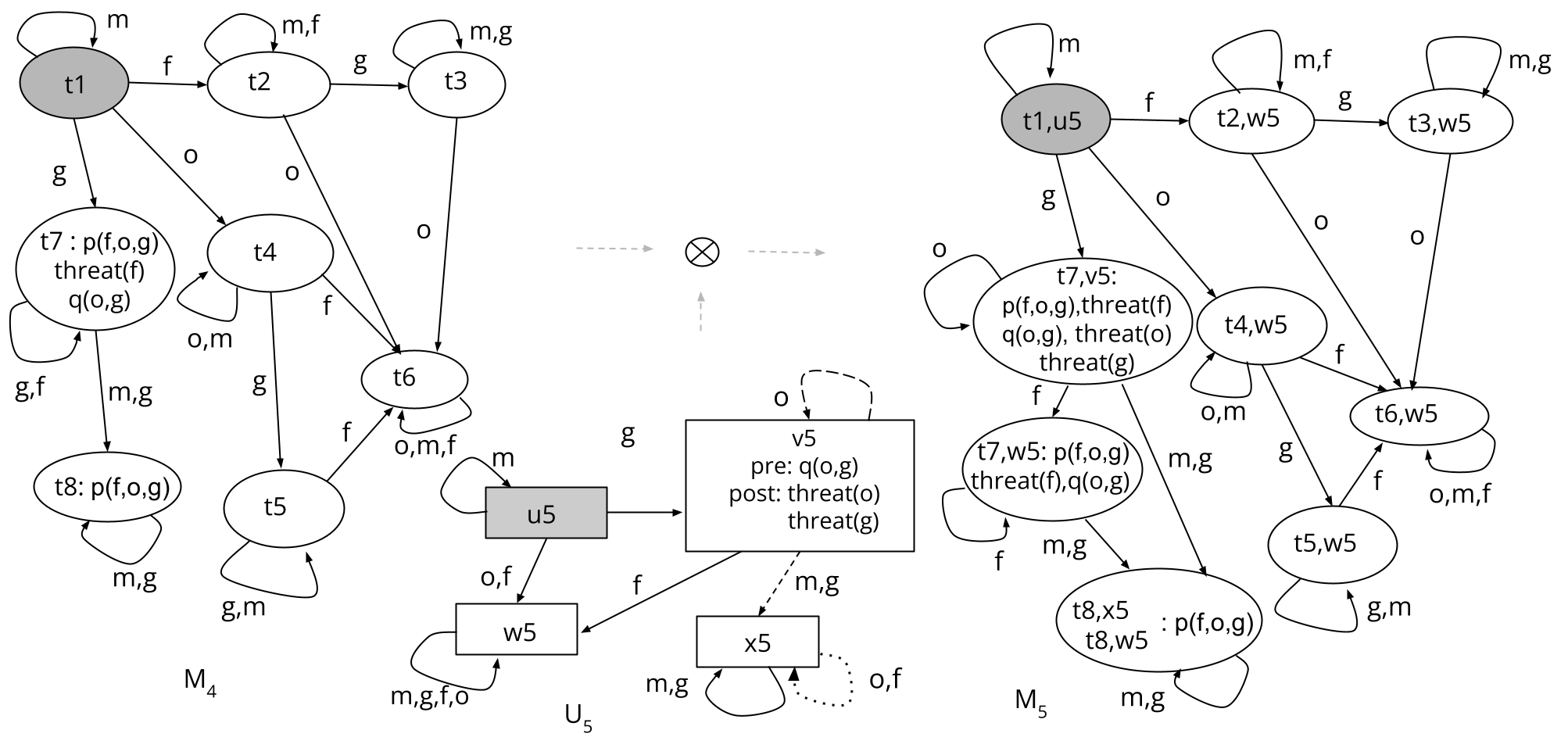}
    \caption{Mouse takes gruffalo to meet the owl; fox is oblivious}
    \label{fig:gmeetso}
\end{figure}

\subsection{Gruffalo meets owl}
\begin{example}
Next in the story, the mouse introduces gruffalo to owl 
as shown in Figure~\ref{fig:gmeetso}.
To save space, worlds $(s1,u3,u4)$, $(s2,w3,w4)$, $(s3,w3,w4)$, $(s4,w3,w4)$, $(s6,w3,w4)$, $(s5,w3,w4)$, $(s1,v3,v4)$ and $(s1,v3,x4)$ are renamed as $t1$, $t2$, $t3$, $t4$, $t5$, $t6$, $t7$ and $t8$, respectively. 
The owl runs away as soon as it sees them, since its belief that gruffalo eats owl is unforgotten.
Gruffalo's belief $P_o P_m \lnot P_o \true$ can be seen in the updated model at $(t7,v5)$.
This is represented as postcondition $threat(o)$.
\end{example}

Since the fluent $q(o,g)$ from Section~\ref{sec:gmeetsf} is a precondition at event $v5$, 
gruffalo infers a threat to itself,
so there is a derived postcondition $threat(g)$. 
The gruffalo runs away and the story has a happy ending. 


\section{Agent-update semantics}\label{sec:aul}
The logic $DEL$ is now redefined with agent-update frames being used as update actions.
(The language DEL does not change.)
The truth of formulas with update modalities is interpreted using sum-product update:

$(M,s) \models  \upd{u}\phi $~iff~ $ (M,s) \models pre(u)$ and  $(M * U, (s,u)) \models \phi$
 

The sublanguages of Definition \ref{def:logic} define the logics
$EL$, \delminustar\/ and \delstar, respectively, with this semantics.
We revisit the \emph{mouse-gruffalo} example one last time.

\begin{example}
Initially $M_0$ is defined such that $(M_0, s) \models P_m \top \wedge P_f \top \wedge P_o \top \wedge \neg P_g \top$. In Figure~\ref{fig:mousedcvsfox}, when the \textit{mouse} deceives the \textit{fox} about existence of a \textit{gruffalo}, keeping the \textit{owl} oblivious, $M_0 * U_1, (s,u1) \models \neg P_g \top \wedge P_f P_g \top \wedge B_m B_f P_g \top \wedge \neg P_o P_g \top.$ When the \textit{mouse} runs into the \textit{owl}, it deceives the \textit{owl} too but keeping the \textit{fox} oblivious this time, as shown in Figure~\ref{fig:mousedcvsowl}, and therefore in the updated model none of them know that the other is aware of \textit{gruffalo}'s agency: $M_0 * U_1 * U_2, (s,u1,u2) \models \neg P_g \top \wedge P_f P_g \top \wedge P_o P_g \top \wedge \neg P_f P_o P_g \top \wedge \neg P_o P_f P_g \top$. 

Eventually the \textit{gruffalo} appears in front of the \textit{mouse} leading to the update of the \textit{mouse}'s beliefs in the existence of \textit{gruffalo}: $M_0 * U_1 * U_2 * U_3, (s,u1,u2,u3) \models P_m P_g \top$. 
The cunning \textit{mouse} contrives a situation where the \textit{gruffalo} believes that \textit{fox} is eaten away (as illustrated in  Figure~\ref{fig:gmeetsf}), then 
according to the the information state of gruffalo: $M_0 * U_1 * U_2 * U_3 * 
U_4, (s,u1,u2,u3,u4) \models  P_g ~(p ~\wedge~ P_m \neg P_f \top)$.
But it can be seen from the model (Figure~\ref{fig:gmeetsf}) that:
$M_0 * U_1 * U_2 * U_3 * 
U_4, (s,u1,u2,u3,u4) \models P_f \top 
\wedge B_m P_f \top$.  
Similarly, after the Gruffalo is taken to the owl, its information state is updated such that: $M_0 * U_1 * U_2 * U_3 *
U_4 * U_5, (s,u1,u2,u3,u4,u5) \models  P_g ~(p ~\wedge~q~\wedge~ P_m (\neg P_f \top ~\wedge~ \neg P_o \top))$.
\end{example}

\begin{theorem}[Model checking]
Given a transitive pointed Kripke model, whether a \delminustar\/ formula holds can be checked in polynomial time,
and whether a \delstar\/ formula holds can be checked in polynomial space.
\end{theorem}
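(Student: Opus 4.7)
The plan is to exhibit a recursive model-checking procedure that evaluates $(M,s) \models \phi$ by structural induction on $\phi$ and then bound its resource usage separately for each sublanguage. For the basic cases (atomic, boolean, and belief $P_i\psi$) the procedure follows Definition~\ref{def:truth} literally. For an update modality $\upd{u}\psi$, the procedure first verifies $pre(u)$ at $s$ via a recursive call on an EL subformula, then constructs the sum-product updated model $M * U$ following Definition~\ref{def:sum-product}, and finally recurses on $\psi$ at the new designated state $(s,u)$. Building $M * U$ reduces to forming the three relations \emph{unforgotten}, \emph{ascribed} and \emph{inherited}, and taking their transitive closure, all feasible in polynomial time in $|M|$ and $|U|$. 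For the union update $\updk\psi$ the procedure branches over the $k$ alternatives.

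For $\delminustar$ the procedure is deterministic. A standard compositional argument yields the polynomial-time bound: the recursion follows the parse tree of $\phi$, each update produces a model whose size is polynomial in the input, the precondition checks reduce to polynomial EL model-checks, and belief enumeration at each step runs over polynomially many states in the current model. The transitive closure in sum-product update contributes only an additional polynomial factor per update. The overall running time can be bounded, as in the standard $\delminus$ case, by a polynomial in $|M|$ and $|\phi|$.

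For $\delstar$ the union modality introduces nondeterministic choice among $k$ alternatives. Running a depth-first search over the choice tree, storing only the current branch's state together with the (polynomial-size) current model, gives a polynomial-space procedure. Savitch's theorem ($\mathrm{NPSPACE} = \mathrm{PSPACE}$) then yields the deterministic polynomial-space bound.

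The main obstacle I expect is controlling blow-up under nested updates: after $k$ nested $\upd{}$ modalities the composite model has in principle $|S| \cdot |E_1| \cdots |E_k|$ worlds. The standard remedy, which we adopt, is to avoid materialising the full nested product and instead represent composite states as paths through the original model together with the sequence of events taken; belief accessibility at such a composite state is then computed on demand directly from Definition~\ref{def:sum-product}. Verifying that this lazy evaluation remains polynomial-time under the new observer-based inheritance, which requires identifying $Obs(u)$ and traversing $R_{Obs(u)}$-edges, is the delicate point, though $Obs(u)$ is readable directly from the agent-update frame and transitivity of the input model keeps the closure computations tame.
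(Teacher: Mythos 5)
The paper states this theorem without any accompanying proof, so there is nothing to compare your argument against directly; judged on its own, your proposal has a genuine gap at exactly the point you flag as ``delicate''. Your second paragraph derives the polynomial-time bound for \delminustar\ from the claim that ``each update produces a model whose size is polynomial in the input'', but that is only true for a single update: under $k$ nested update modalities the materialised composite model has $\abs{S}\cdot\abs{E_1}\cdots\abs{E_k}$ worlds, which is exponential in the length of the formula (already with two events per frame and $k$ linear in $\abs{\phi}$), so the explicit-construction algorithm is exponential-time and the ``standard compositional argument'' does not go through. Your final paragraph correctly identifies this and proposes the right repair --- represent composite states lazily as tuples and compute accessibility on demand --- but the claim that this on-demand computation is polynomial-time is precisely the content of the theorem, and you assert it rather than prove it. The difficulty is concrete: Definition~\ref{def:sum-product} defines $R'_a$ as the \emph{transitive closure} of $Q_a^{unf}\cup Q_a^{asc}\cup Q_a^{inh}$ over the composite state space, so deciding whether $(s,\vec u)\,R'_a\,(t,\vec v)$ holds is a reachability question in an implicitly represented graph with exponentially many nodes, and reachability in such graphs is PSPACE-complete in general, not polynomial. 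To obtain PTIME one must exploit specific structure --- for instance that $Q_a^{asc}$ and $Q_a^{inh}$ alter only one coordinate of a composite state, that the component relations are transitive, and that consequently only polynomially many composite states within the modal depth of the remaining formula ever need to be examined --- and none of this is carried out. This is not a formality: the paper cites PSPACE-hardness of \delminusprod\ model checking over equivalence relations, so the claimed polynomial bound for the transitive case must hinge on exactly the structural facts you leave unexamined.

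Two smaller points. For \delstar\ your depth-first search ``storing \ldots the (polynomial-size) current model'' runs into the same blow-up: the current model after nested updates is not of polynomial size, so the polynomial-space bound also requires the lazy representation, together with the observations that one composite state plus the recursion stack fits in polynomial space and that the transitive-closure reachability test is itself decidable in (nondeterministic, hence deterministic) polynomial space. Finally, the appeal to Savitch's theorem is unnecessary where you invoke it: $\updk$ is an existential choice among $k$ explicitly listed alternatives, which a deterministic procedure simply enumerates.
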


There is a lower bound of polynomial space for \delprod\/ model checking \citep{aucher2013complexity}, 
as well as for \delminusprod\/ model checking over equivalence relations \citep{dehaan2021complexity}.

\subsection{Proof system}\label{sec:extended}
The proof system for the logic \delminustar\/ with agent-update semantics gives axioms and inference rules to prove valid formulas.
There are 9 axioms and 3 standard inference rules below. 
In the expression $\upd{u}P_a \phi$, $(U,u)$ stands for an agent-update, and $a$ is any agent, which may or may not be involved in the update.

\begin{enumerate}
    \item\label{item:pc} all instantiations of propositional tautologies 
    \item $B_a (\phi \implies \psi) \implies (B_a \phi \implies B_a \psi)$
    \item\label{item:trans} $B_a \phi \implies  B_a B_a \phi$ 
    \item\label{item:distr} $[U,u](\phi \implies \psi) \implies ([U,u]\phi \implies [U,u]\psi)$
    \item\label{item:val} $\upd{u}p \Leftrightarrow (pre(u) \land ((p \land (post(u)(p)=no)) \lor (post(u)(p)=\true)))$
    \item\label{item:det} $\upd{u}\neg \phi \Leftrightarrow (pre(u) \land \neg \upd{u}\phi)$
    \item\label{item:and} $\upd{u}(\phi \lor \psi) \Leftrightarrow (\upd{u}\phi \lor \upd{u}\psi)$ 
    
\item\label{item:knowledgeaction} $\upd{u}P_a \phi \Leftrightarrow (pre(u) \wedge \bigvee_{v: u\obs{a}v,\neg u\del{a}v} P_a \upd{v}\phi)$,
for $a \notin  (Add(u) \setminus Del(u))$ 

\item\label{item:inherit}
$\upd{u}P_{a} \phi \Leftrightarrow 
(pre(u) \wedge {(}
\begin{array}{l}
\bigvee_{v: u\obs{a}v} P_a \upd{v}\phi ~\vee~
~\bigvee_{w: u\new{a}w} \upd{w}\phi ~\vee~
~\bigvee_{b: b \in Obs(u)} P_b \upd{u}\phi
\end{array}~{)}
)$,\\
for $a \in (Add(u) \setminus Del(u))$


\item\label{item:mp} From $\phi$ and $\phi \implies \psi$, infer $\psi$
\item From $\phi$, infer $B_{a} \phi$
\item\label{item:nec} From $\phi$, infer $[U,u] \phi$
\end{enumerate}

It follows from \cite[Theorem 8.54]{van2007dynamic} that an additional axiom
suffices for logic \delstar:

\begin{enumerate}\setcounter{enumi}{12}
\item $\updk\phi \Leftrightarrow 
(\dia{U_1,u_1}\phi \lor \dots \lor \dia{U_k,u_k}\phi)$
\end{enumerate}

Distributivity of bi-implication is provable using axioms (\ref{item:and}),(\ref{item:distr}),(\ref{item:pc}),(\ref{item:mp}),(\ref{item:nec}):
$[U,u](\phi \Leftrightarrow \psi) \implies ([U,u]\phi \Leftrightarrow [U,u]\psi)$ \cite[Exercise 1.4]{chellas}.
This is used in the completeness proof.
%

\subsection{Soundness and completeness of the proof system}
\label{sec:sound}
\begin{theorem}[Soundness and completeness]
\label{thm:sound}
The proof system given in Section \ref{sec:extended} is sound and complete for transitive Kripke models.
\end{theorem}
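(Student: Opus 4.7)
My plan is to follow the standard reduction-based strategy for dynamic epistemic logic, first establishing soundness by checking each axiom against the sum-product semantics (Definition~\ref{def:sum-product}), and then establishing completeness by providing a translation from \delminustar\/ into EL that is provably truth-preserving, so that completeness reduces to the known completeness of EL (a variant of K4, since we work with transitive frames only).

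For soundness, axioms (\ref{item:pc}) through (\ref{item:and}) are either classical or are essentially the standard $DEL$ reduction axioms for atomic propositions, negation, and disjunction under an update modality; these transfer unchanged because the state set $S'$ and valuation $I'$ are defined exactly as in product update. The transitivity axiom (\ref{item:trans}) follows because $R'_a$ is declared to be a transitive closure, so transitivity is preserved. The interesting cases are (\ref{item:knowledgeaction}) and (\ref{item:inherit}). For (\ref{item:knowledgeaction}), where agent $a$ is neither added nor deleted at $u$, I would unfold $R'_a$ and observe that $Q_a^{asc}$ and $Q_a^{inh}$ contribute nothing at $(s,u)$ for such $a$, so only $Q_a^{unf}$ remains; the characterization then matches the usual product-update reduction axiom. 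For (\ref{item:inherit}), the crucial new axiom, I would verify that the three disjuncts on the right exactly enumerate the one-step successors of $(s,u)$ under $Q_a^{unf} \cup Q_a^{asc} \cup Q_a^{inh}$, and then argue that passing to transitive closure does not enlarge the set of $\phi$-witnesses, because each intermediate world again satisfies the axiom, so longer paths are captured by iterating the one-step characterization (this is where the transitivity of $R_a$ and of $\obs{a}$ at the semantic level, together with axiom (\ref{item:trans}), do their work).

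For completeness, I would define by recursion on the structure of formulas a translation $t:$ \delminustar\/ $\to$ EL that pushes update modalities inward using the reduction axioms: (\ref{item:val}) for atoms, (\ref{item:det}) and (\ref{item:and}) for the Boolean cases, and (\ref{item:knowledgeaction})/(\ref{item:inherit}) for the belief cases, splitting on whether $a \in Add(u)\setminus Del(u)$. Nested updates $\upd{u}\upd{v}\phi$ are handled by first reducing the inner one using the distributivity of bi-implication under $[U,u]$ (noted just after axiom~13), so that a suitable well-founded measure (for instance, lexicographic on number of update modalities and formula length under the outermost update) decreases with each rewrite. A routine induction shows $\vdash t(\phi) \Leftrightarrow \phi$ using axioms (\ref{item:val})--(\ref{item:inherit}) and the inference rules (\ref{item:mp})--(\ref{item:nec}). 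Then if $\phi$ is valid in all transitive Kripke models, so is $t(\phi)$; since $t(\phi)$ is an EL formula and EL is complete for transitive frames (axioms (\ref{item:pc}), (\ref{item:trans}), and the K axiom together with the belief necessitation rule give K4), we derive $\vdash t(\phi)$ and hence $\vdash \phi$.

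The main obstacle I foresee is verifying soundness of axiom (\ref{item:inherit}) rigorously, because $R'_a$ is defined as a transitive closure of the union of three heterogeneous relations, while the axiom offers only a one-step disjunctive unfolding. The subtlety is that, for a newly added agent $a$, a witness for $P_a \phi$ at $(s,u)$ might arise from a composite path (for example an inherited step followed by an ascribed step), and I must argue that every such composite path is already subsumed by one of the three disjuncts once the axiom is applied recursively at the intermediate world; this amounts to showing that the three relation classes interact correctly with the transitivity closure operation, and to verifying that an inherited step from an observer $b$ lands in a world where $\upd{u}\phi$ is once again analyzable by the same axiom schema. Once this structural lemma is in place, the rest of the argument is a standard $DEL$-style reduction, and the extension to \delstar\/ via axiom~13 is immediate as indicated by the citation to \cite{van2007dynamic}.
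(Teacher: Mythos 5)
Your overall strategy --- soundness by case-checking each axiom against Definition~\ref{def:sum-product}, completeness by a truth-preserving reduction of \delminustar\/ into EL followed by an appeal to completeness of K4, with axiom~13 handling \delstar\/ --- is exactly the route the paper takes; indeed your sketch is more explicit than the paper's own argument, which consists of a single paragraph describing what each axiom is meant to say. The comparison therefore turns entirely on the one step you flag but do not discharge: the soundness of axiom (\ref{item:inherit}) against the transitive-closure semantics of $R'_a$.

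That step is a genuine gap, not a formality. For a newly added agent $a$ that already has arrows in the input model (case 3(a) of the paper's elaboration of Definition~\ref{def:sum-product}), the union $Q_a^{unf}\cup Q_a^{asc}\cup Q_a^{inh}$ need not be transitive, so its closure can contain pairs reachable only by composite paths, e.g.\ $(s,u)\,Q_a^{inh}\,(t,u)$ followed by $(t,u)\,Q_a^{unf}\,(t',v)$, where $s R_b t$ for an observer $b$ and $t R_a t'$ but \emph{not} $s R_a t'$. If $\phi$ holds at $(t',v)$ and at no one-step successor of $(s,u)$, then $\upd{u}P_a\phi$ is true at $s$ while every disjunct on the right of axiom (\ref{item:inherit}) can fail: the first needs an $R_a$-successor of $s$ itself, the second needs $\phi$ at some $(s,w)$ with $u\new{a}w$, and the third needs $\phi$ at some $(t'',u)$ with $s R_b t''$ --- none of which is forced. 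Your proposed remedy, applying the axiom ``recursively at the intermediate world,'' does not obviously close this: the axiom instance at $s$ fixes the formula $\phi$, and what holds at the intermediate world $t$ is $\upd{u}P_a\phi$, which would feed the third disjunct only if that disjunct read $P_b\upd{u}P_a\phi$ rather than $P_b\upd{u}\phi$. So either you need a structural lemma showing the union is already transitively closed under the side conditions in force (which would have to exclude or tame pre-existing $a$-arrows), or the axiom --- and with it the reduction underlying both soundness and completeness --- must be adjusted. The paper's proof does not engage with this point at all, so you have correctly located the crux; but as written neither your argument nor the paper's establishes it, and this is the place where the theorem actually needs work.
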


Axioms $1-3$ are for the base $EL$ defined on transitive frames. Axiom 4 distributes the update modality over implication.
Axiom \ref{item:val} says propositional valuation does not change during updates except according to the setting given by the postcondition. 
The satisfaction of the propositional formulas is derived. 
Changes will affect formulas with belief modalities.
Axiom \ref{item:det} says updates are deterministic, so handling a belief modality correctly also works to settle a formula with the negation of a belief modality.  
Axiom \ref{item:and} is the standard axiom for distribution of an operator over disjunction. 

Reduction of belief after update $(U,u)$ for agents $a$ such that $a$ is not being added or being deleted at $u$ is achieved through the \emph{unforgotten-belief} axiom \ref{item:knowledgeaction}, which extends the \emph{action-belief} axiom known in the literature \citep{baltag2004logics,van2007dynamic}.
It says that for an agent $a$ not being added in the update, its possible beliefs reduce to those that were reachable before the update by arrows which were not deleted during the update.
The next axiom is a generalisation.

\subsection{Decidability} 
The logic \delminusprod\/ is defined in the book of \cite{van2007dynamic}, 
its complexity is not pinned down there. 
The minus sign in the superscript specifies that unions of update frames are not available in the syntax.
Together with our generalization to transitive frames, this enables a better upper bound than found in the literature.

\begin{theorem}[Satisfiability]
There is a polynomial space algorithm to check satisfiability of a \delminustar\/ formula. There is a nondeterministic exponential time algorithm to check satisfiability of a \delstar\/ formula.
\end{theorem}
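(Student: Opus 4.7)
The plan is to adapt the standard DEL reduction strategy, using the reduction axioms (\ref{item:val})--(\ref{item:inherit}) from Section~\ref{sec:extended}, which by Theorem~\ref{thm:sound} translate every \delminustar\/ formula $\phi$ into a logically equivalent EL formula $\tau(\phi)$. We can then invoke the known PSPACE upper bound for multi-modal K4 satisfiability on the EL side (see Figure~\ref{fig:complexityclasses1}). The catch is blowup: each application of (\ref{item:knowledgeaction}) or (\ref{item:inherit}) rewrites $\upd{u}P_a\phi$ into a disjunction of up to $|E|+|A|$ subformulas each carrying a fresh update modality, and with nested updates this compounds exponentially.

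For \delminustar, I would avoid materialising $\tau(\phi)$ explicitly. Instead, I would maintain a succinct representation consisting of a pointer into $\phi$ together with a stack of pending pointed agent-update frames that still need to be pushed inward, regenerating the currently needed disjunct of $\tau$ on demand whenever a tableau procedure for K4 needs to examine a subformula. The modal depth of $\tau(\phi)$ is bounded by $md(\phi)\cdot(1+ud(\phi))$ (where $md$ is modal depth and $ud$ is update depth), which is polynomial in $|\phi|$, so the recursive tableau bottoms out at polynomial depth; each recursive call stores only a polynomial-size set of succinctly represented subformulas. This yields a PSPACE algorithm.

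For \delstar, I would first eliminate the union-update modalities using the axiom $\updk\phi \Leftrightarrow \dia{U_1,u_1}\phi\lor\dots\lor\dia{U_k,u_k}\phi$, nondeterministically guessing one disjunct at each occurrence to reduce satisfiability to that of a \delminustar\/ instance. Nested union occurrences can multiply to exponentially many after distribution, so the guessed \delminustar\/ formula may have size exponential in $|\phi|$; invoking the PSPACE procedure on this formula consumes exponential time overall, giving a NEXPTIME algorithm. The main obstacle sits in the PSPACE case: one must verify that the three-way disjunction of (\ref{item:inherit}) --- for unforgotten, ascribed, and inherited arrows of a newly added agent --- interacts cleanly with the succinct stack representation, because the inherited branch depends on $Obs(u)$, which may change meaning under previously pushed updates and is further complicated by the transitive closure step of Definition~\ref{def:sum-product}. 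A careful invariant tying the stack contents to the transformed-model arrows should discharge this, but setting it up correctly is the delicate step.
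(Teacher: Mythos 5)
Your proposal is correct and follows essentially the same route as the paper: both reduce to transitive-frame EL via the reduction axioms, avoid materialising the exponentially large translation by exploring one branch of the resulting tableau tree at a time with the needed disjuncts and agent relations regenerated on demand (so that a satisfying path can be guessed and verified in polynomial space), and both attribute the jump to NEXPTIME for \delstar\/ to the union modality forcing exponentially many independent disjunct choices across different branches of the satisfying tree, so that the whole tree must be guessed. The delicate point you flag concerning axiom 9, $Obs(u)$ and the transitive closure is discharged in the paper exactly along the lines you suggest, by recomputing the agent relations from the formulas on the current root-to-node path in polynomial time per query.
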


\begin{proof}
Valid equivalences transform a \delstar\/ formula to doxastic logic $EL$ on transitive frames. 
As shown in the completeness proof, there is a reduction algorithm to do this.
Since there is an algorithm to check satisfiability of doxastic logic on $K4$ frames \citep{bdvmodal},
there is an algorithm to check \delstar\/ satisfiability.

For the complexity of the algorithm,
note that an update formula $\upd{u} \phi$ 
contains within it a description of the agent-update frame.
The number of agents, the number of events in it, the observability relations are all included in the size of the frame, defined in Section~\ref{sec:background}.
In the length of an update formula we include the size of the update frame. 

Consider a representative world where the reduction equivalence 
is used as a rewrite rule going from left to right.
The number of disjuncts on the right hand side of an axiom
is linear in the number of events in the agent update model, as Definition \ref{def:observer} defines observers in terms of events.
Thus the right hand side is bounded by a polynomial in the representation of $(U,u)$ and at most twice the modal depth of the formula, allowing for a rewrite step to go from an update modality over a belief modality to a belief modality over an update modality, before doing a further rewrite step.

Thus a satisfiability algorithm can run on a tree with depth linear in the modal depth of the formula;
for example, going from a parent node with the left hand side of a reduction equivalence to a child node with one of the disjuncts on the right hand side.
The tree has number of nodes exponential in the length of the formula $\phi$.

Checking whether an agent relation holds for a node can be done 
in time polynomial in the formulas which appear along the path from the node to the root of the tree.
Agent relations are recomputed every time they are required, so the algorithm can take exponential time, but working on one path uses at most polynomial space.

Such a (satisfying) path can be guessed and verified by a nondeterministic algorithm which takes space polynomial in the length of the formula.
This gives an algorithm for satisfiability of \delminustar.

For \delstar\/ this does not suffice since the $\updk$ modality can refer to different branches of the satisfying tree \citep{aucher2013complexity}. 
The whole satisfying tree has to be guessed and verified, which takes a nondeterministic algorithm taking time exponential in the length of the formula.
\end{proof}

We illustrate the status of the model checking and satisfiability problems for the different logics in the picture. 
The upper bounds for \delminusprod\/ have been improved from the earlier picture. 
All bounds shown are tight, the upper bounds match the lower bounds, apart from the possibilities that PTIME=PSPACE and
PSPACE=NEXPTIME.


\begin{figure}[!htb]
    \centering
    \includegraphics[width=0.95\textwidth]{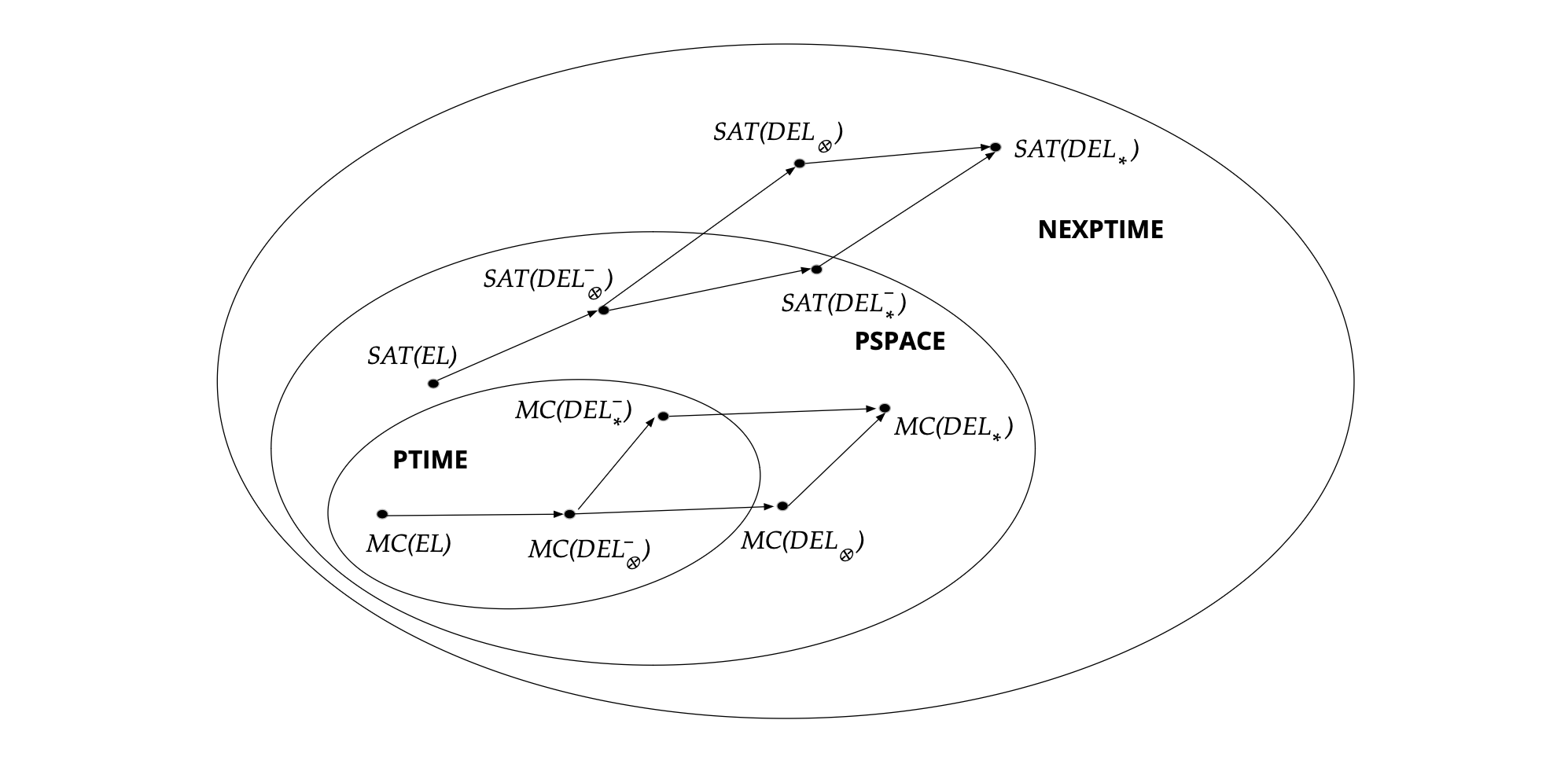}
    \caption{Upper bounds for $MC(\mathcal{L})$ and $SAT(\mathcal{L})$, revised}
    \label{fig:complexityclasses2}
\end{figure}


\section{Related work and discussion}
\label{sec:lit}
Approaches to model epistemic actions as changes on Kripke models broadly fall under two categories: state elimination and arrow elimination. 
For instance, \cite{plazapublic} models public announcements by eliminating states from the input model where the announced formula is false.
On the other hand, \cite{gerbrandy1997reasoning} model public announcements by eliminating arrows from the model such that only those worlds are reachable by the agents where the announced formula is true.

The action framework of \cite{baltag1998logic,baltag2004logics} is used to model not only epistemic actions but also ontic effects. We present our agent updates in the style of action frames.
The book by \cite{van2007dynamic} gives various dynamic epistemic logics.
Our language \delminusprod\/ is the language $\mathcal{L}^{-}_{K\otimes}$, as can be derived from \cite[Theorem 8.54]{van2007dynamic}.
Complexity of a dynamic logic avoiding nondeterministic choice was established in
\cite{halpern1983sdpdl}.
Complexity of model checking and satisfiability of \delprod\/ with nondeterministic choice
(the logic $\mathcal{L}_{K\otimes}$ of \cite{van2007dynamic}) 
was established by \cite{aucher2013complexity}.

Because our updates can add and delete agents, we can model interesting examples involving deception about agents.
The formal account of \cite{sakama} gives a classification of deceptions, of which we consider only simple forms in this paper,
which are called \emph{deception by lying} and \emph{intentional deception by lying}.
In our case the lying is more general, about existence of agents, and our semantics has additional features.
Modelling lies has been studied in dynamic epistemic logic \citep{sakama2010logical,Van2012lying,van2014dynamics}. 

\cite{sarkadi2019deception} model machine deception using ideas from communication theory 
and implement it in BDIA (Belief, Desires, Intentions and Action) agent architecture.
They require the precondition
that the deceived person is aware of the ascribed formula, 
which we do not require;
and that the deceiver has a `theory of mind'
about the deceived person's beliefs, desires, intentions, which we do not model.

Awareness of propositions has been studied in many papers, with agents becoming aware or unaware of them \citep{fagin1988belief,Van2010dynamics}. 
The papers \citep{Van2009awareness,Van2014semantics} specifically mention awareness of agents, 
using bisimulation quantifiers to model introduction of new propositions (which includes agent propositions). 
Our agent-update approach is completely different, it works at the level of semantic operations on action models.
A simpler form of quantification is used
by \cite{anantha2018bundled} to obtain polynomial space algorithms.

Independently of \cite{wang2022qfetmla} which has the same idea,
we model existence of agents at a world using presence of that agent's accessibility at the world.
Our results in Section~\ref{sec:aul} show that complexity is unaffected.

\cite{kpsf2019} study agent contexts in distributed systems with Byzantine faults.
The recent papers \citep{Van2021deadalive,glr2022} allow
removal of agents. These papers explore other interesting 
directions such as concurrency. 

\cite{amarel1971rep} suggested using the folk problem of 
missionaries and cannibals
to study planning problems in artificial intelligence. 
Smullyan's books, starting from \cite{smullyan}, have fascinating logic puzzles of various kinds. 
The book of \cite{prisonerlightbulb} is an inspiring account of modelling epistemic puzzles as stories. 
Books by \cite{woods1974logic,woods2018truth} explore the paradox that Sherlock Holmes lived in 221B Baker Street in the 19th century, and that he didn't since he didn't exist then.

We make extensive use of \emph{The Gruffalo} \citep{donaldson1999gruffalo} from children’s fiction to illustrate deception. 
We invite the reader to explore the book and its sequel \citep{donaldson2004child} for interesting ideas. 
\cite{logictovalue} points out that these stories (and others) have not been subject to much analysis in possible worlds theory.

\begin{remark}[Historical]
The story by \cite{donaldson1999gruffalo} is based on an Eastern folktale.
It is close to one in \cite{arabiannights}, a retelling of a 10th century Persian story in the collection \emph{One thousand and one nights} (also called Arabian nights).
Here a cat on the ground, after eating several hens, tells a cock on a tree, perched beyond its reach, that 
the King of Beasts has declared that all of them should love each other and proposes that they go for a stroll.
The cock says it can see that the hounds of the King of Beasts are coming to make others aware of the declaration.
The cat hurriedly leaves, and on being asked why, tells the cock that it is not sure that the hounds know of the declaration.
The essence of the story is the same, this time with the cock as deceiver and cat as deceived.
Modelling it would require a planning domain \citep{mcdermott2000}, showing the asymmetric location where the cock's visibility is higher than that of the cat.
Donaldson's story is simpler and easier to model in pure doxastic logic. 

It appears this Arabian nights story is in turn derived from a simpler one in the Buddhist Jataka tales, which go at least as far back as the 3rd century \citep{jatakatales}.
In the \emph{Kukkuta jataka}, engraved on a Buddhist \emph{stupa} at Bharhut in Madhya Pradesh, India, a cat declares its love for a cock and proposes to it.
The cock declares (at a higher plane) that there cannot be friendship or love between predator and prey.
There is no deception here.
Modelling the story would require quantified logic.
\end{remark}

\subsection{Discussion}
In this paper we have explored the idea of agent-addition and deletion in a group. 
We see our work as a starting point.
We can broadly classify the different kinds of agent-updates that are possible under nested levels of modalities,
as hinted at in Section~\ref{sec:obsdec}.
More refined ways of ascribing beliefs and nesting updates remain to be found.

Finding validities which express meaningful properties would be of interest.
\cite{wang2022qfetmla} has weaker forms of reflexivity,
which hold in models of epistemic term-modal logic.
Since our logic is doxastic, this particular property is not valid for our models. 
\cite{kpsf2019}'s ``hope'' axioms offer an interesting direction.
\cite{glr2022} work with symmetry properties intermediate between K4 and K45. 

Action frames in dynamic epistemic logic have traditionally been associated with `hard' update operations.
We have extended them with agent-update operations.
`Soft' operations such as \emph{upgrades} have also been studied for belief revision (for example, see the book of \citet{ben2010open}).
Interesting agent-update operations in such settings remain open. 
A start was made in \cite{singh2023two}.

The authors would like to thank Hans van Ditmarsch, Anantha Padmanabha, R.~Ramanujam and Yanjing Wang for discussions on an earlier version of this paper.

\bibliographystyle{apacite}
\bibliography{main}

\end{document}